\documentclass[sigconf,nonacm]{acmart}
\usepackage{amsmath}
\usepackage{amsthm}

\usepackage{caption}
\usepackage{subcaption}
\usepackage{multirow}
\usepackage{rotating}
\usepackage{graphicx}
\usepackage{booktabs}
\usepackage{enumitem}

\newtheorem{lemma}{Lemma}
\newtheorem{proposition}{Proposition}
\newtheorem{theorem}{Theorem}

\AtBeginDocument{%
  }

\begin{document}

\title{RankFormer: Rethinking Time Series Transformers through Representation Capacity Analysis and Progressive Decoding}

\author{Peiwang Tang}
\affiliation{%
  \institution{University of Science and Technology of China}
  \city{Hefei}
  \state{Anhui}
  \country{China}
}
\affiliation{%
  \institution{State Key Laboratory of Cognitive Intelligence}
  \city{Hefei}
  \state{Anhui}
  \country{China}
}
\affiliation{%
  \institution{Shanghai Black Wing Asset Management}
  \city{Shanghai}
  \country{China}
}
\email{tpw@mail.ustc.edu.cn}

\author{Cheng HE}
\affiliation{%
  \institution{University of Science and Technology of China}
  \city{Hefei}
  \state{Anhui}
  \country{China}
}
\affiliation{%
  \institution{State Key Laboratory of Cognitive Intelligence}
  \city{Hefei}
  \state{Anhui}
  \country{China}
}
\affiliation{%
  \institution{Shanghai Black Wing Asset Management}
  \city{Shanghai}
  \country{China}
}
\email{cheng.he@mail.ustc.edu.cn}

\author{Xijie Liang}
\affiliation{%
  \institution{Shanghai Black Wing Asset Management}
  \city{Shanghai}
  \country{China}
}
\email{lxjie@blackwingasset.com}

\author{Defu Lian}
\affiliation{%
  \institution{University of Science and Technology of China}
  \city{Hefei}
  \state{Anhui}
  \country{China}
}
\affiliation{%
  \institution{State Key Laboratory of Cognitive Intelligence}
  \city{Hefei}
  \state{Anhui}
  \country{China}
}
\authornote{Corresponding author.}
\email{liandefu@ustc.edu.cn}

\renewcommand{\shortauthors}{Tang et al.}

\makeatletter
\renewcommand{\@mkauthors@iii}{%
  \gdef\@currentauthors{}%
  \gdef\@currentaffiliation{}%
  \global\setbox\mktitle@bx=\vbox{%
    \unvbox\mktitle@bx\par\medskip
    \begingroup
    \centering\normalfont
    {\fontsize{11}{14}\selectfont
      Peiwang Tang\textsuperscript{1,2,3}\quad
      Cheng HE\textsuperscript{1,2,3}\quad
      Xijie Liang\textsuperscript{3}\quad
      Defu Lian\textsuperscript{1,2,\ding{41}}\par}
    \smallskip
    {\fontsize{9}{12}\selectfont
      \textsuperscript{1}University of Science and Technology of China, Hefei, Anhui, China\par
      \textsuperscript{2}State Key Laboratory of Cognitive Intelligence, Hefei, Anhui, China\par
      \textsuperscript{3}Shanghai Black Wing Asset Management, Shanghai, China\par}
    \smallskip
    {\fontsize{8}{11}\selectfont
      \href{mailto:tpw@mail.ustc.edu.cn}{\texttt{tpw@mail.ustc.edu.cn}}\quad
      \href{mailto:cheng.he@mail.ustc.edu.cn}{\texttt{cheng.he@mail.ustc.edu.cn}}\par
      \href{mailto:lxjie@blackwingasset.com}{\texttt{lxjie@blackwingasset.com}}\quad
      \textsuperscript{\ding{41}}\href{mailto:liandefu@ustc.edu.cn}{\texttt{liandefu@ustc.edu.cn}}\par}
    \endgroup
    \bigskip
  }%
}
\makeatother

\begin{abstract}
Multivariate time series forecasting plays a critical role in Web and
cyber-physical systems, where large-scale services and IoT infrastructures
generate complex temporal data. Transformer-based models have achieved strong empirical performance in this domain, yet the principles governing the expressive
capacity of their learned representations remain insufficiently understood.
Moreover, many existing models rely on simplified decoding strategies,
including channel-independent modeling and direct projection heads, which
limit their ability to capture complex multivariate dynamics. 
To address these challenges, we propose \textbf{RankFormer}, a novel
encoder-decoder architecture for multivariate time series forecasting.
We establish a rank-based analysis framework to investigate how
tokenization, embedding, and attention designs affect the representation
capacity of time series Transformers. Our analysis reveals that
patchification increases the potential representation capacity by
aggregating multiple temporal observations into each token, while subsequent
encoding operations determine how much of this capacity is realized. Guided
by these insights, we develop a rank-enhancing encoder with Hourglass
Attention to improve rank stability during information interaction.
Furthermore, we introduce the Interactive Progressive Decoder
(IPD), which improves future trajectory modeling through global-guided
cross-channel interaction and progressive refinement. 
Extensive experiments
on real-world long-term forecasting benchmarks demonstrate that RankFormer
achieves competitive performance against state-of-the-art methods.
Code repository: \url{https://github.com/TangPeiwang/Rankformer}.
\end{abstract}

\begin{CCSXML}
<ccs2012>
   <concept>
       <concept_id>10010147.10010178</concept_id>
       <concept_desc>Computing methodologies~Artificial intelligence</concept_desc>
       <concept_significance>500</concept_significance>
       </concept>
 </ccs2012>
\end{CCSXML}

\ccsdesc[500]{Computing methodologies~Artificial intelligence}

\keywords{Time Series, Representation Capacity,  Hourglass Attention,  Interactive Progressive Decoder}

\maketitle

\section{Introduction}

Multivariate time series forecasting is a core problem in modern Web and cyber-physical systems. 
The growing integration of online services and IoT devices produces large volumes of continuously evolving sensor data \cite{zhang2026semixer, zhang2026re, zhang2026time}, supporting applications such as server load balancing, traffic prediction, and resource planning \cite{wang2026eeo, zhang2026enhancing, zhang2026amortized}, as well as broader domains including finance, power systems, and transportation \cite{zhou2021informer, wu2021autoformer, fang2025efficient}. 
Transformer-based models have achieved substantial progress in time series forecasting owing to their ability to capture long-range dependencies \cite{wutimesnet, liu2022pyraformer, zhou2022fedformer}. 
Recent architectural advances, including patch-based tokenization and specialized attention mechanisms, have further improved forecasting performance across diverse benchmarks \cite{liu2024timer, liu2025timer}. 
However, despite this empirical success, the underlying principles that determine the representation capacity of time series Transformers and their ability to translate learned representations into accurate forecasts remain insufficiently understood.

\begin{figure*}[ht]
    \centering
    \begin{subfigure}{0.48\textwidth}
        \centering
        \includegraphics[width=\linewidth]{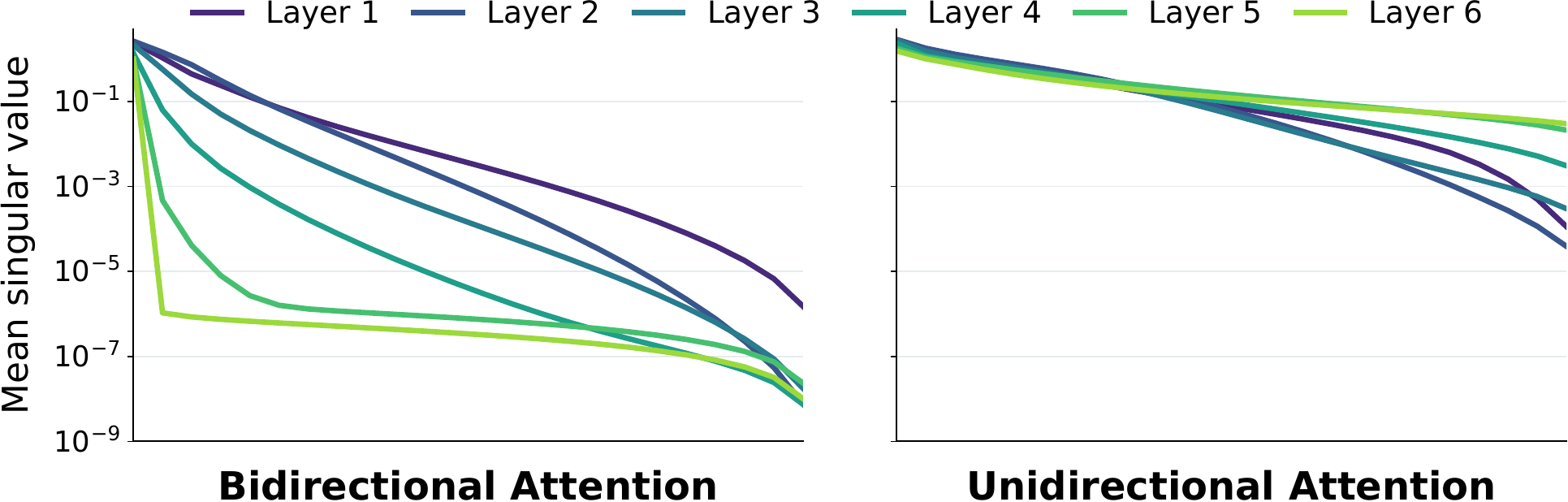}
        \caption{} 
        \label{fig_2a}
    \end{subfigure}
    \hfill
    \begin{subfigure}{0.48\textwidth}
        \centering
        \includegraphics[width=\linewidth]{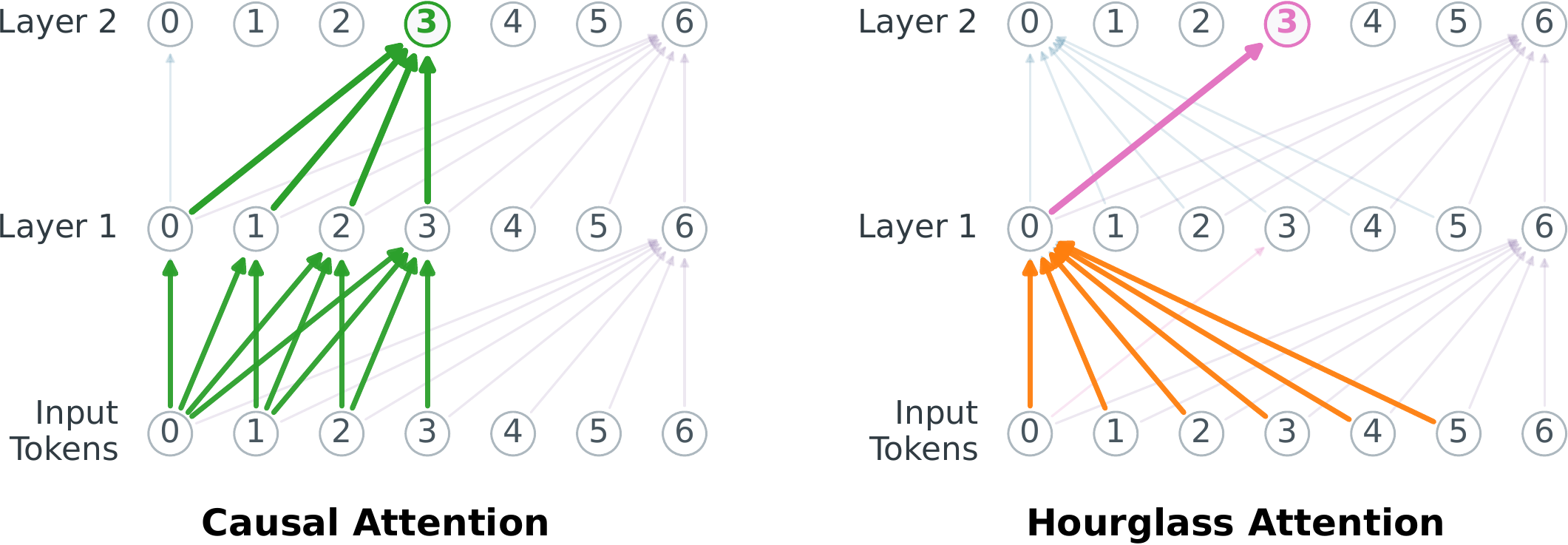}
        \caption{} 
        \label{fig_2b}
    \end{subfigure}
    
    \caption{Empirical validation of rank stability and Hourglass Attention (HgA). (a) Rapid singular value decay shows bidirectional attention induces deep-layer rank collapse, while causal masking preserves representational diversity. (b) Causal masking restricts early tokens (starvation). In contrast, HgA uses a symmetric prefix as a global hub, enabling efficient context retrieval while maintaining rank stability.}
    \label{fig_2}
\end{figure*}

This limited understanding raises two fundamental questions: 

\textbf{First, how do encoder architectures create and preserve expressive representations? }
Existing time series Transformers improve representation capability through
various encoding designs, including tokenization, embedding, and attention
mechanisms, yet how these components jointly shape the capacity of learned temporal representations remains insufficiently understood.
Patchification provides a representative example: by aggregating multiple
temporal observations into each token, it expands the attainable
representation space compared with point-wise tokenization.
However, the realized representation capacity is further shaped by
subsequent encoding operations.
Linear embeddings introduce rank constraints, while bidirectional attention (BiA)
may cause excessive token mixing and representation homogenization, leading
to rank degeneration \cite{noci2022signal, saadamind, dong2021attention,yu2025understanding}.
Although unidirectional attention provides stronger rank stability, its
restricted receptive field limits global information interaction.
Consequently, existing attention designs struggle to simultaneously
maintain representation diversity and achieve effective cross-token
information exchange, limiting the ability of time series Transformers to
fully exploit their representation space.

\textbf{Second, how can forecasting decoders fully exploit the learned representations? }
Although modern encoders can produce increasingly expressive latent features, many existing forecasting architectures still rely on simplified decoding strategies. Channel-independent modeling limits cross-variable interaction, while one-shot projection heads such as Flatten-Linear map encoded representations to the entire prediction horizon through a fixed transformation. 
Such designs overlook the evolving temporal organization and provide limited capability for progressively refining complex future trajectories, which is particularly restrictive for multivariate forecasting with strong cross-channel dependencies.

To address these challenges, we propose \textbf{RankFormer}, a principled
encoder-decoder architecture for multivariate time series forecasting.
For the encoder, we establish a rank-based analysis framework to analyze how
patchification, embedding, and attention topology shape the effective capacity of learned temporal representations.
Based on these observations, we introduce Hourglass Attention (HgA) to
improve rank stability during attention-based interaction, preserving
diverse representations while enabling effective global information
exchange.
For the decoder, we develop an Interactive Progressive Decoder (IPD), which
enhances future trajectory generation through global-context-guided
cross-channel interaction and progressive refinement. 
Together, RankFormer provides a principled approach to improving
representation learning and future trajectory modeling in time series
Transformers.

Our main contributions are summarized as follows:
\begin{itemize}
\item We provide a rank-based analysis of time series Transformer encoders,
revealing how patchification, embedding, and attention operations affect
representation capacity and offering a theoretical explanation for the
effectiveness of patch-based modeling.

\item We propose \textbf{RankFormer}, consisting of HgA and IPD. 
HgA preserves expressive representations through improved attention design, while IPD captures cross-channel dependencies and progressively refines future trajectories.

\item Extensive experiments on widely used long-term forecasting benchmarks validate the theoretical observations and demonstrate consistently strong performance against competitive state-of-the-art methods.

\end{itemize}

\begin{figure*}[ht]
	\centering
	\includegraphics[scale=0.66]{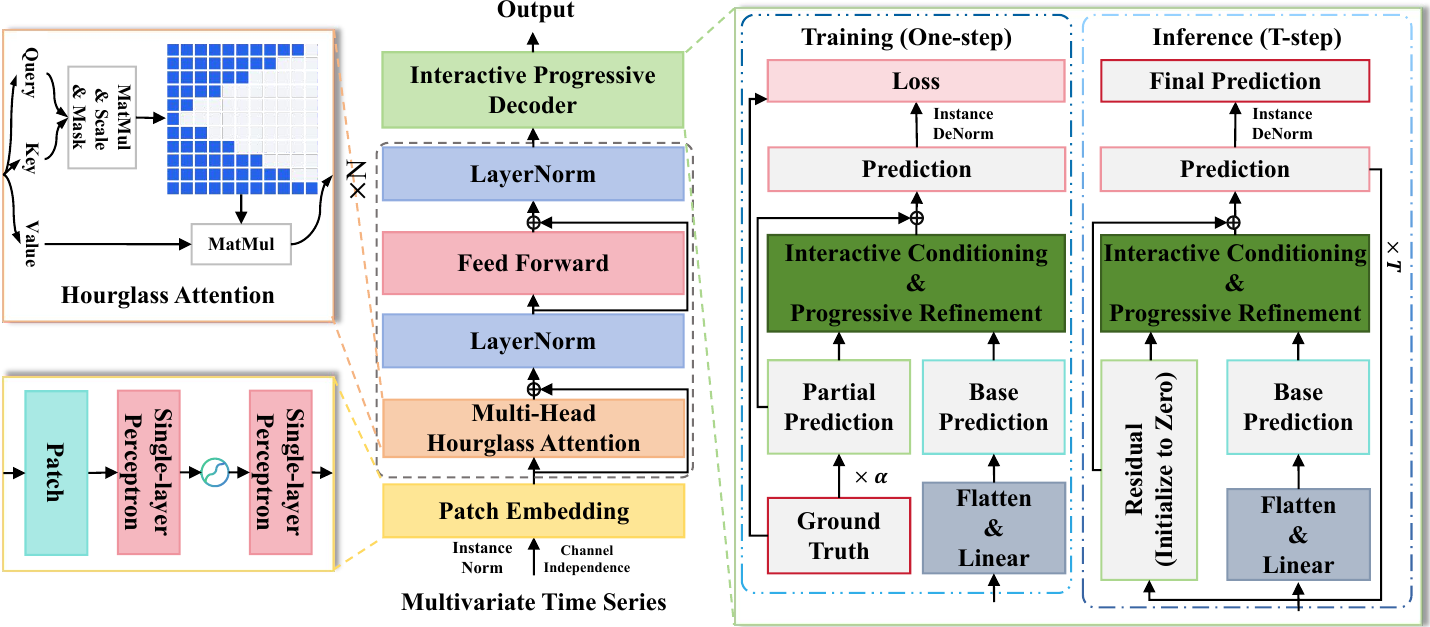}
  \caption{Overall structure of RankFormer. The model preprocesses multivariate time series via Instance Norm and CI, utilizing Patch Embedding and a Multi-Head Hourglass Attention encoder to generate a base prediction. An Interactive Progressive Decoder then refines these features. It trains via one-step residual learning using randomized partial ground truth, while inference employs a $T$-step progressive refinement from a zero-residual state, concluding with Instance Denorm.}
	\label{fig_rankformer}
\end{figure*}

\section{Rank Theory Analysis and Key Theorems}
\label{sec:rank_theory}

In this section, we establish a rank-based theoretical framework to analyze
the core components of time series Transformers. 
We consider the forecasting problem as learning a mapping from historical
observations $\mathbf{X}\in\mathbb{R}^{L\times C}$ to future sequences
$\mathbf{Y}\in\mathbb{R}^{\tau\times C}$.
The model first transforms the input sequence into hidden representations
$\mathbf{H}\in\mathbb{R}^{N\times D}$, whose structure determines how much
independent information can be preserved for subsequent forecasting.
We use matrix rank, $\operatorname{rank}(\mathbf{H})$, as a quantitative
measure to characterize the independent information capacity of hidden
representations.
A higher rank indicates more independent representation directions and
greater potential to preserve diverse information, whereas rank degradation
suggests increased redundancy and reduced representation diversity.

\subsection{Patching as a Rank Expansion Mechanism}

When a univariate sequence $\mathbf{x}\in\mathbb{R}^{L}$ is represented
through point-wise tokenization, the resulting token matrix has a single
feature dimension, and any linear embedding inherits a rank upper bound of
one: 
\[
\operatorname{rank}(\mathbf{x}^{\top}\mathbf{W})
\leq \operatorname{rank}(\mathbf{x})
\leq 1.
\]

\begin{proposition}[Rank Expansion via Patching]
\label{prop:patching_rank}
Partitioning $\mathbf{x}$ into patches of length $P$ and arranging them into
a patch matrix $\mathbf{X}_p\in\mathbb{R}^{N\times P}$, where
$N=\lfloor L/P\rfloor$, gives the attainable rank upper bound:
\[
\operatorname{rank}(\mathbf{X}_p)\leq \min(N,P).
\]
\end{proposition}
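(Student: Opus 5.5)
The plan is to reduce the statement to the elementary fact that the rank of any real matrix is bounded by both its number of rows and its number of columns. First I would make the construction of $\mathbf{X}_p$ explicit. With $N=\lfloor L/P\rfloor$, define the entries by $(\mathbf{X}_p)_{i,j}=x_{(i-1)P+j}$ for $1\le i\le N$ and $1\le j\le P$. Any trailing $L-NP<P$ observations are discarded, or handled by a separate padding convention that does not affect the shape. This fixes $\mathbf{X}_p\in\mathbb{R}^{N\times P}$, so the bound only needs to be proved for a generic $N\times P$ matrix.

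Second, I would record the two inequalities separately. The rank equals the dimension of the column space, which is spanned by $P$ column vectors, so $\operatorname{rank}(\mathbf{X}_p)\le P$. The rank also equals the dimension of the row space, since row rank equals column rank, and the row space is spanned by $N$ row vectors, so $\operatorname{rank}(\mathbf{X}_p)\le N$. Combining the two gives $\operatorname{rank}(\mathbf{X}_p)\le\min(N,P)$.

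Third, because the proposition describes this as the \emph{attainable} upper bound, I would add a short attainability remark to contrast it with the point-wise bound of $1$ stated above. Let $r=\min(N,P)$ and choose $\mathbf{x}$ so that for $i\le r$ the $i$-th patch equals the standard basis vector $e_i\in\mathbb{R}^P$, with all remaining entries zero. Then the leading $r\times r$ block of $\mathbf{X}_p$ is the identity, so $\operatorname{rank}(\mathbf{X}_p)=r$. A genericity argument gives the same conclusion more broadly: the set of $\mathbf{x}$ for which all $r\times r$ minors vanish is a proper algebraic subset of $\mathbb{R}^L$. Hence the bound holds with equality for almost every $\mathbf{x}$, and exceeds $1$ whenever $N,P\ge 2$.

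There is no real obstacle here. The only point that needs care is stating the patch indexing precisely, including the truncation convention, so that the matrix dimensions are unambiguous. The attainability claim is optional, and the explicit identity-block construction settles it directly.
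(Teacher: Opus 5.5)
Your proof is correct and matches the reasoning the paper relies on: the paper gives no separate proof of Proposition~\ref{prop:patching_rank}, treating $\operatorname{rank}(\mathbf{X}_p)\leq\min(N,P)$ as immediate from the $N\times P$ shape of $\mathbf{X}_p$, which is your row-space/column-space argument. Your identity-block construction, together with the genericity remark, correctly justifies the word ``attainable,'' which the paper only asserts informally.
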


When $P>1$, patching expands the maximum attainable rank bound from one to
$\min(N,P)$, increasing the potential number of independent directions
available for subsequent representation learning. This provides a rank-based
perspective for understanding the effectiveness of patch-based modeling in
time series Transformers.

\subsection{Embedding Layer: Breaking Linear Rank Constraints}

To map the patched representation $\mathbf{X}_p$ into a hidden representation
$\mathbf{H}\in\mathbb{R}^{N\times D}$, standard linear projections
($\mathbf{H}=\mathbf{X}_p\mathbf{W}$) preserve the rank constraint of the
input representation:
\[
\operatorname{rank}(\mathbf{H})
\leq
\min(\operatorname{rank}(\mathbf{X}_p),D)
\leq
\min(N,P,D).
\]
When $D>P$, linear embeddings remain constrained by the patch-induced rank
bound, limiting the number of independent directions that can be represented
in the hidden space.

To overcome this linear rank constraint, we consider nonlinear MLP
embeddings:
\[
\mathbf{H}
=
\phi(\mathbf{X}_p\mathbf{W}_1+\mathbf{b}_1)\mathbf{W}_2+\mathbf{b}_2 .
\]

\begin{theorem}[Maximum Rank Embedding]
\label{thm:max_rank_embedding}
Let $\phi$ be a continuous, non-polynomial activation function. For any
$N$ distinct input patches $\mathbf{X}_p\in\mathbb{R}^{N\times P}$, there
exist MLP parameters such that the resulting hidden representation achieves the
maximum attainable rank:
\[
\operatorname{rank}(\mathbf{H})=\min(N,D).
\]
\end{theorem}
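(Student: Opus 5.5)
Fix $N$ distinct patches $\mathbf{x}_1,\dots,\mathbf{x}_N\in\mathbb{R}^{P}$, the rows of $\mathbf{X}_p$. Write $M$ for the hidden width of $\mathbf{W}_1\in\mathbb{R}^{P\times M}$, which we are free to choose. Set $\mathbf{b}_2=\mathbf{0}$, so that $\mathbf{H}=\mathbf{A}\mathbf{W}_2$ with $\mathbf{A}=\phi(\mathbf{X}_p\mathbf{W}_1+\mathbf{b}_1)\in\mathbb{R}^{N\times M}$. If $\operatorname{rank}(\mathbf{A})=N$, then choosing $\mathbf{W}_2$ suitably gives $\operatorname{rank}(\mathbf{H})=\min(N,D)$. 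When $D\le N$, take $\mathbf{W}_2$ with $D$ linearly independent columns; then $\mathbf{A}\mathbf{W}_2$ has rank $D$, since $\mathbf{A}$ is injective on $\mathbb{R}^M$ restricted to a complement of its kernel. When $D>N$, take $\mathbf{W}_2$ such that $\mathbf{A}\mathbf{W}_2$ contains an invertible $N\times N$ block. Concretely, write $\mathbf{A}$ as having an invertible $N\times N$ submatrix on some column set $S$, and let $\mathbf{W}_2$ select those columns, padded with zeros or with arbitrary independent combinations. Since $\min(N,D)$ is also the trivial upper bound, the whole problem reduces to the following claim: \emph{there exist $\mathbf{W}_1,\mathbf{b}_1$ with $M=N$ such that the $N\times N$ matrix $\phi(\mathbf{X}_p\mathbf{W}_1+\mathbf{b}_1)$ is nonsingular.}

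The first step is to reduce to a scalar problem. Since the $\mathbf{x}_i$ are distinct, pick a direction $\mathbf{u}\in\mathbb{R}^P$ with pairwise distinct projections $t_i=\mathbf{u}^\top\mathbf{x}_i$. Such a $\mathbf{u}$ exists because it only has to avoid finitely many hyperplanes $\{\mathbf{u}:\mathbf{u}^\top(\mathbf{x}_i-\mathbf{x}_j)=0\}$. Take every column of $\mathbf{W}_1$ proportional to $\mathbf{u}$: column $k$ equals $w_k\mathbf{u}$, with bias $b_k$. Then $\mathbf{A}_{ik}=\phi(w_k t_i+b_k)$, and we need scalars $(w_k,b_k)_{k=1}^N$ making this matrix nonsingular.

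The second step, which is the main obstacle, is to exploit non-polynomiality. Suppose, for contradiction, that $\det[\phi(w_k t_i+b_k)]=0$ for all choices of parameters. Then the vectors $v(w,b)=(\phi(wt_1+b),\dots,\phi(wt_N+b))\in\mathbb{R}^N$ all lie in a proper subspace. Indeed, if their span were all of $\mathbb{R}^N$, we could pick $N$ independent ones as columns. Hence there is a nonzero $\mathbf{c}$ with
\[
\sum_{i} c_i\,\phi(wt_i+b)=0 \quad\text{for all } w,b .
\]
I would then invoke the density theorem of Leshno--Lin--Pinkus--Schocken. For continuous non-polynomial $\phi$, the span of $\{\phi(wt+b)\}$ is dense in $C([a,a'])$ on any compact interval containing the $t_i$. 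Each function $g$ in the closure then also satisfies $\sum_i c_i g(t_i)=0$. Choose $g$ continuous with $g(t_j)=1$ for one index $j$ where $c_j\neq0$, and $g(t_i)=0$ for $i\neq j$; this gives $c_j=0$, a contradiction.

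If we wish to avoid citing the density theorem, there is a more hands-on route. When $\phi$ is smooth, differentiate $m$ times in $w$ at $w=0$ to get $\sum_i c_i t_i^m\phi^{(m)}(b)=0$. Non-polynomiality gives a $b$ with $\phi^{(m)}(b)\ne0$ for each $m$, so $\sum_i c_i t_i^m=0$ for $m=0,\dots,N-1$. The Vandermonde matrix of the distinct $t_i$ is invertible, which forces $\mathbf{c}=0$. The general continuous case follows by mollifying $\phi$ against a smooth bump, which preserves the linear relation and non-polynomiality. This is exactly the Leshno et al.\ argument, so citing it is cleaner. Combining the two steps yields nonsingular $\mathbf{A}$, and the choice of $\mathbf{W}_2$ above gives $\operatorname{rank}(\mathbf{H})=\min(N,D)$.
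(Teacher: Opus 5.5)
Your proof is correct and follows essentially the paper's route. You project the patches onto a direction with pairwise distinct values, argue by contradiction that the activation vectors $(\phi(wt_i+b))_{i=1}^N$ span $\mathbb{R}^N$ because $\phi$ is non-polynomial, use independent ones as first-layer columns, and choose $\mathbf{W}_2$ so that rank $\min(N,D)$ survives. The differences are cosmetic (hidden width $N$ instead of the paper's $K=\min(N,D)$), except that you actually justify the key non-vanishing step, via the Leshno--Lin--Pinkus--Schocken density theorem plus a continuous interpolant or the derivative--Vandermonde argument, where the paper only appeals to a loosely stated ``discriminatory property''.
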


This theorem shows that nonlinear embeddings overcome the additional rank
restriction imposed by linear projections, allowing hidden representations
to achieve their maximum attainable rank determined by the sample and hidden
dimensions.

\begin{figure*}[ht]
	\centering
	\includegraphics[scale=0.7]{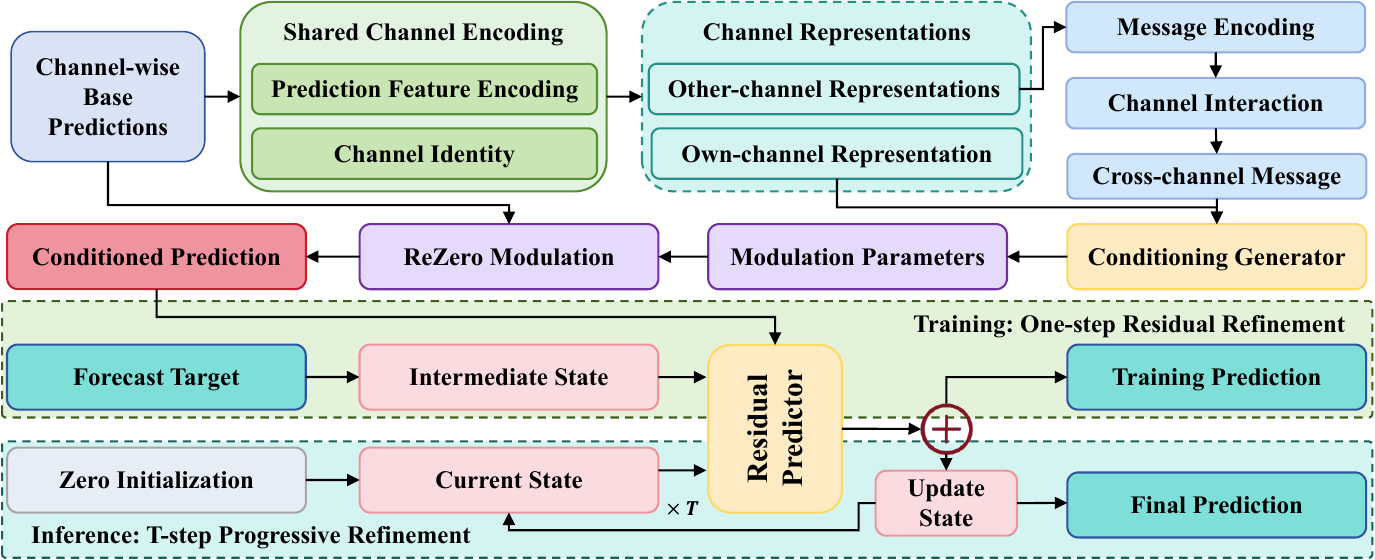}
  \caption{Internal architecture of the Interactive Progressive Decoder.}
	\label{fig_IRD}
\end{figure*}

\subsection{Attention Mechanisms: Rank Preservation via Attention Topology}

BiA enables global information aggregation across tokens,
but its dense mixing structure does not provide an explicit guarantee for
preserving representation rank. Repeated dense token mixing may lead to
representation homogenization and rank degradation in deep networks
\cite{noci2022signal, saadamind}.

To analyze how attention topology affects rank stability, we compare
BiA with unidirectional (e.g., causal) attention.
Let $\mathbf{A}_{\textup{bid}}$ and $\mathbf{A}_{\textup{causal}}$ denote
their attention weight matrices, and let
$\mathbf{O}_{\textup{bid}}$ and $\mathbf{O}_{\textup{causal}}$ denote the
corresponding outputs after value projection and output projection.

\begin{theorem}[Rank Preservation of Causal Attention]
\label{thm:causal_rank_advantage}

Given identical projection parameters
($\mathbf{W}_Q,\mathbf{W}_K,\mathbf{W}_V,\mathbf{W}^{O}$),
causal attention preserves the rank of the value representation and
guarantees:
\[
\operatorname{rank}(\mathbf{O}_{\textup{causal}})
\geq
\operatorname{rank}(\mathbf{O}_{\textup{bid}}).
\]

\end{theorem}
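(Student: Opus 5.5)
The argument has two steps: show that the causal attention matrix is invertible, so it costs no rank, and then bound the bidirectional output by the rank of the value representation it has in common with the causal one. Write $\mathbf{V}=\mathbf{H}\mathbf{W}_V$ for the shared value representation, and let $\mathbf{W}^{O}$ be the common output projection, so that
\[
\mathbf{O}_{\textup{causal}}=\mathbf{A}_{\textup{causal}}\mathbf{V}\mathbf{W}^{O},
\qquad
\mathbf{O}_{\textup{bid}}=\mathbf{A}_{\textup{bid}}\mathbf{V}\mathbf{W}^{O}.
\]
Both attention matrices are built from the same $\mathbf{W}_Q,\mathbf{W}_K$. They differ only in the mask applied before the row-wise softmax.

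\emph{Step 1 (causal attention is invertible).} Under a causal mask, $(\mathbf{A}_{\textup{causal}})_{ij}=0$ for $j>i$, so $\mathbf{A}_{\textup{causal}}$ is lower triangular. Its diagonal entries are
\[
(\mathbf{A}_{\textup{causal}})_{ii}=\frac{\exp(s_{ii})}{\sum_{j\le i}\exp(s_{ij})},
\qquad s_{ij}=\mathbf{q}_i^{\top}\mathbf{k}_j/\sqrt{d}.
\]
Each is strictly positive because the exponential never vanishes and the scores are finite. A triangular matrix with nonzero diagonal is nonsingular, since its determinant is $\prod_i(\mathbf{A}_{\textup{causal}})_{ii}>0$. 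This is the precise content of ``causal attention preserves the rank of the value representation'': multiplying by an invertible matrix leaves rank unchanged, so
\[
\operatorname{rank}(\mathbf{A}_{\textup{causal}}\mathbf{V})=\operatorname{rank}(\mathbf{V}).
\]

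\emph{Step 2 (push through the output projection).} Left multiplication by an invertible matrix commutes with taking rank, so
\[
\operatorname{rank}(\mathbf{O}_{\textup{causal}})
=\operatorname{rank}(\mathbf{A}_{\textup{causal}}\mathbf{V}\mathbf{W}^{O})
=\operatorname{rank}(\mathbf{V}\mathbf{W}^{O}).
\]
For the bidirectional branch, the row space of $\mathbf{A}_{\textup{bid}}\mathbf{V}\mathbf{W}^{O}$ lies inside the row space of $\mathbf{V}\mathbf{W}^{O}$. Sylvester's bound therefore gives
\[
\operatorname{rank}(\mathbf{O}_{\textup{bid}})\le\min\bigl(\operatorname{rank}(\mathbf{A}_{\textup{bid}}),\operatorname{rank}(\mathbf{V}\mathbf{W}^{O})\bigr)\le\operatorname{rank}(\mathbf{V}\mathbf{W}^{O}).
\]
Chaining the two displays yields the claimed inequality.

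\emph{Main obstacle.} There is no pointwise rank-collapse statement for $\mathbf{A}_{\textup{bid}}$: a generic dense softmax matrix is often itself invertible. The inequality therefore cannot come from showing that bidirectional attention loses rank. It comes from the asymmetry that causal attention provably attains the upper bound $\operatorname{rank}(\mathbf{V}\mathbf{W}^{O})$, whereas bidirectional attention has no such guarantee. Equality may hold, which is consistent with the $\ge$ in the theorem. I would state this explicitly and point to the cited homogenization results (e.g.\ \cite{noci2022signal, dong2021attention}) for why $\mathbf{A}_{\textup{bid}}$ tends toward near rank-one in deep stacks. One technicality must be handled up front. Scores must be finite, i.e.\ no masking beyond causality, so that the diagonal entries are strictly positive. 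For the multi-head case, apply Step 1 per head, and either note that the concatenated output with a block-diagonal invertible mixing inherits the same argument, or restrict the claim to the single-head form as stated.
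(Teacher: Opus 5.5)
Your proof is correct and follows the paper's argument: the causal matrix is lower triangular with strictly positive diagonal, hence invertible, so $\operatorname{rank}(\mathbf{O}_{\textup{causal}})=\operatorname{rank}(\mathbf{V}\mathbf{W}^{O})$, and the bidirectional output cannot exceed that rank. The paper phrases the last step as the exact deficit $\dim(\ker\mathbf{A}_{\textup{bid}}\cap\operatorname{Col}(\mathbf{V}\mathbf{W}^{O}))$, which also characterizes equality, whereas your bound $\operatorname{rank}(\mathbf{A}\mathbf{B})\le\operatorname{rank}(\mathbf{B})$ is enough for the inequality.

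One correction to your closing remark: the multi-head case does not follow from a ``block-diagonal invertible mixing.'' Different heads carry different $\mathbf{A}_h$, so $[\mathbf{A}_1\mathbf{V}_1,\ldots,\mathbf{A}_H\mathbf{V}_H]$ is not an invertible left multiple of $[\mathbf{V}_1,\ldots,\mathbf{V}_H]$, and the inequality can in fact fail. Take two tokens, two heads, $\mathbf{V}_1=\mathbf{V}_2=\mathbf{e}_1$, and attention matrices that share the same second row but differ in the first row: the causal output then has rank one, while the bidirectional output has rank two. So you must restrict to the single-head form $\mathbf{O}=\mathbf{A}\mathbf{X}_V$, as the paper does.
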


Theorem~\ref{thm:causal_rank_advantage} establishes that causal masking
provides a structural rank-preservation property during attention
aggregation. 
In contrast, BiA may reduce output rank when its learned
aggregation fails to preserve independent information directions in the
value representation (as shown in Figure~\ref{fig_2}(a)).
This analysis highlights the
importance of attention topology in maintaining diverse representations
during token interaction. Rather than favoring a specific topology, it
motivates attention designs that preserve rank stability while enabling
effective information exchange, leading to our proposed Hourglass Attention.

\section{RankFormer}

Guided by the rank-based theoretical framework established above, we propose
\textbf{RankFormer}, as illustrated in Figure~\ref{fig_rankformer}. 
The architecture consists of a rank-guided encoder with HgA, which improves rank stability during attention-based interaction, and an IPD, which captures complex future trajectories through global-context-guided cross-channel interaction and progressive refinement.

\subsection{Patch Embedding}

To mitigate the adverse effects of magnitude variations across different variables, we first apply standard instance normalization to the input sequence $\mathbf{X}$ along the temporal dimension. The normalized time series is then partitioned into patches of length $P$, transforming the input into $\mathbf{X}_{\text{patches}} \in \mathbb{R}^{C \times N \times P}$, where $N$ is the number of patches. As elucidated in Proposition~\ref{prop:patching_rank}, this patching operation structurally expands the rank capacity of the input representation.

Subsequently, to break the inherent rank bound of linear projections, we map these patches into the hidden space $\mathbb{R}^D$ using a two-layer MLP with a ReLU activation \cite{nair2010rectified}:
\[
\mathbf{H}_{\text{embed}} = \text{ReLU}(\mathbf{X}_{\text{patches}} \mathbf{W}_1 + \mathbf{b}_1) \mathbf{W}_2 + \mathbf{b}_2
\]
where $\mathbf{W}_1 \in \mathbb{R}^{P \times D_{\text{mid}}}$ and $\mathbf{W}_2 \in \mathbb{R}^{D_{\text{mid}} \times D}$. As mathematically guaranteed by Theorem~\ref{thm:max_rank_embedding}, this nonlinear formulation ensures that the embedding achieves the theoretical maximum rank of $\min(N, D)$, fully utilizing the representational potential of the high-dimensional hidden space.

\subsection{Hourglass Attention}

BiA enables global information aggregation across tokens,
but repeated dense mixing may cause representation homogenization and rank
degradation in deep networks. In contrast, causal attention provides a
structural rank-preservation guarantee through its triangular topology, but
its directional information flow limits efficient context exchange across
tokens. To combine these complementary properties, we propose HgA, a topology-aware attention mechanism that reorganizes
causal connectivity to maintain rank stability while enabling more effective
information interaction.

HgA introduces an hourglass-shaped attention topology through three design
principles. First, early tokens are assigned broader receptive fields to
aggregate global context from the observed sequence. Second, intermediate
tokens use restricted receptive fields to reduce excessive token mixing and
maintain representation diversity. Third, receptive fields gradually expand
toward later tokens, preserving detailed recent information for downstream
forecasting. Together, these designs redistribute information access within
the sequence while retaining the rank-stability property inherited from
causal attention.

HgA is constructed by a topology-preserving reordering of causal attention.
For an input sequence of length $N$, we define $\rho(i)$ as the maximum
visible key index of query position $i$ under this reordering:
\begin{equation}
\rho(i)=
\begin{cases}
\max(N-2-2i,0), & i\leq\lfloor (N-1)/2\rfloor,\\
2i-N+1, & i>\lfloor (N-1)/2\rfloor .
\end{cases}
\end{equation}

This mapping creates an hourglass-shaped receptive field: the visible
range contracts toward the sequence center and expands toward the sequence
end. The corresponding mask is:
\begin{equation}
\label{eq:hg_mask}
\mathbf M^{\mathrm{Hg}}[i,j]
=
\begin{cases}
0,&j\leq\rho(i),\\
-\infty,&\text{otherwise}.
\end{cases}
\end{equation}

The attention operation is:
\begin{equation}
\mathbf A^{\mathrm{Hg}}
=
\operatorname{Softmax}
\left(
\frac{\mathbf Q\mathbf K^\top}{\sqrt{d_k}}
+
\mathbf M^{\mathrm{Hg}}
\right),
\end{equation}

\begin{equation}
\operatorname{HgAttn}(\mathbf Q,\mathbf K,\mathbf V)
=
\mathbf A^{\mathrm{Hg}}\mathbf V .
\end{equation}

Since HgA is obtained through a reordering of causal connectivity, its
attention matrix retains the rank-preservation property of causal attention
while providing a different receptive field allocation.

\subsection{Interactive Progressive Decoder}
\label{sec:ipd}

The IPD (Figure~\ref{fig_IRD}) addresses the mapping bottleneck
by transforming channel-independent encoder representations into
multivariate forecasts. It first generates a channel-wise base prediction,
then introduces cross-channel conditioning to incorporate interactions among
variables, and finally refines the prediction through progressive residual
updates.

\paragraph{Base Prediction.}

IPD first obtains a channel-independent base prediction from the flattened
encoder representations using a shared linear projection. We denote the
initial prediction of channel $C$ as
$\mathbf{y}^{\mathrm{base}}_C\in\mathbb{R}^{\tau}$, where $\tau$ is the forecast
horizon. This base prediction provides the initialization for subsequent
cross-channel conditioning and progressive residual refinement.

\begin{table*}[t]
\centering
\caption{Results of the Long-Term Time Series Forecasting.  All baseline models were configured consistently with SEER. The reported results represent the average performance over four distinct forecasting horizons: \{96, 192, 336, 720\}.}
\label{tab1}
\renewcommand{\arraystretch}{1}
\resizebox{0.99\linewidth}{!}{%
\begin{tabular}{c|cc|cc|cc|cc|cc|cc|cc|cc|cc|cc}
\toprule
\multicolumn{1}{c|}{\multirow{2}{*}{Models}} & \multicolumn{2}{c|}{RankFormer} & \multicolumn{2}{c|}{SEER} & \multicolumn{2}{c|}{SRSNet} & \multicolumn{2}{c|}{DUET} & \multicolumn{2}{c|}{xPatch} & \multicolumn{2}{c|}{Amplifier} & \multicolumn{2}{c|}{FredFormer} & \multicolumn{2}{c|}{iTransformer} & \multicolumn{2}{c|}{PatchTST} & \multicolumn{2}{c}{DLinear} \\
\multicolumn{1}{c|}{} & \multicolumn{2}{c|}{(ours)} & \multicolumn{2}{c|}{(2026)} & \multicolumn{2}{c|}{(2025)} & \multicolumn{2}{c|}{(2025)} & \multicolumn{2}{c|}{(2025)} & \multicolumn{2}{c|}{(2025)} & \multicolumn{2}{c|}{(2024)} & \multicolumn{2}{c|}{(2024)} & \multicolumn{2}{c|}{(2023)} & \multicolumn{2}{c}{(2023)} \\
\midrule
\multicolumn{1}{c|}{Metrics} & MSE & MAE & MSE & MAE & MSE & MAE & MSE & MAE & MSE & MAE & MSE & MAE & MSE & MAE & MSE & MAE & MSE & MAE & MSE & MAE \\
\midrule
ETTh1 & \textbf{0.410} & \textbf{0.421} & 0.423 & 0.424 & 0.442 & 0.433 & 0.443 & 0.436 & 0.444 & 0.429 & 0.449 & 0.437 & 0.448 & 0.435 & 0.454 & 0.448 & 0.469 & 0.455 & 0.461 & 0.458 \\
\midrule
ETTh2  & \textbf{0.354} & \textbf{0.381} & 0.356 & 0.384 & 0.376 & 0.402 & 0.372 & 0.398 & 0.369 & 0.392 & 0.389 & 0.411 & 0.378 & 0.403 & 0.383 & 0.407 & 0.387 & 0.407 & 0.563 & 0.519 \\
\midrule
ETTm1  & \textbf{0.367} & \textbf{0.377} & 0.377 & 0.381 & 0.385 & 0.395 & 0.390 & 0.393 & 0.387 & 0.383 & 0.383 & 0.396 & 0.385 & 0.398 & 0.407 & 0.410 & 0.387 & 0.400 & 0.404 & 0.408 \\
\midrule
ETTm2  & \textbf{0.268} & \textbf{0.311} & 0.276 & 0.317 & 0.284 & 0.329 & 0.280 & 0.324 & 0.280 & 0.319 & 0.280 & 0.326 & 0.281 & 0.324 & 0.288 & 0.332 & 0.281 & 0.326 & 0.354 & 0.402 \\
\midrule
Weather & \textbf{0.234} & \textbf{0.256} & 0.246 & 0.265 & 0.250 & 0.277 & 0.251 & 0.273 & 0.247 & 0.266 & 0.247 & 0.275 & 0.256 & 0.278 & 0.258 & 0.278 & 0.259 & 0.281 & 0.265 & 0.315 \\
\midrule
Solar & \textbf{0.212} & \textbf{0.217} & 0.221 & 0.224 & 0.250 & 0.281 & 0.237 & 0.233 & 0.277 & 0.268 & 0.222 & 0.256 & 0.227 & 0.267 & 0.233 & 0.262 & 0.270 & 0.307 & 0.330 & 0.401 \\
\midrule
Electricity & \textbf{0.162} & \textbf{0.246} & 0.169 & 0.251 & 0.189 & 0.276 & 0.172 & 0.259 & 0.184 & 0.266 & 0.174 & 0.268 & 0.178 & 0.270 & 0.178 & 0.270 & 0.216 & 0.304 & 0.225 & 0.319 \\
\midrule
Traffic & 0.431 & 0.249 & \textbf{0.422} & \textbf{0.245} & 0.494 & 0.307 & 0.451 & 0.269 & 0.500 & 0.283 & 0.485 & 0.315 & 0.434 & 0.289 & 0.428 & 0.282 & 0.555 & 0.362 & 0.625 & 0.383 \\
\midrule
$\mathrm{1^{st} Count}$ & 7 & 7 & 1 & 1 & 0 & 0 & 0 & 0 & 0 & 0 & 0 & 0 & 0 & 0 & 0 & 0 & 0 & 0 & 0 & 0 \\
\bottomrule
\end{tabular}}
\end{table*}

\paragraph{Cross-channel Conditioning.}

To introduce multivariate interactions while preserving channel-specific
information, IPD constructs an interaction-aware conditioning representation
for each channel. The base prediction is first projected into a feature space:
\begin{equation}
\mathbf{u}_C
=
\operatorname{GELU}
(\mathbf{P}\mathbf{y}^{\mathrm{base}}_C+\mathbf{b}_P).
\end{equation}

A learnable channel embedding $\mathbf{e}_C$ is combined with the prediction
feature:
\begin{equation}
\mathbf{f}_C=[\mathbf{u}_C;\mathbf{e}_C],
\end{equation}
where $\mathbf{f}_C$ represents the channel-specific conditioning feature.

The cross-channel branch transforms each channel representation into a
message feature:
\begin{equation}
\mathbf{m}_j
=
\mathbf{W}_m\mathbf{f}_j+\mathbf{b}_m .
\end{equation}

For channel $C$, the received information from other variables is aggregated
as:
\begin{equation}
\mathbf{m}^{\mathrm{cross}}_C
=
\sum_{j\neq C}
\mathbf{M}_{Cj}\mathbf{m}_j,
\end{equation}
where $\mathbf{M}$ denotes the learnable channel interaction matrix.

The own-channel feature and cross-channel message jointly generate
horizon-wise modulation parameters:
\begin{equation}
\begin{bmatrix}
\boldsymbol{\gamma}_C\\
\boldsymbol{\beta}_C
\end{bmatrix}
=
\mathbf{G}\mathbf{f}_C
+
\lambda\mathbf{K}\mathbf{m}^{\mathrm{cross}}_C
+
\mathbf{b}_G .
\end{equation}

The conditioned prediction is obtained through ReZero modulation:
\begin{equation}
\mathbf{y}^{\mathrm{cond}}_C
=
\mathbf{y}^{\mathrm{base}}_C
+
\rho_c
(
\mathbf{y}^{\mathrm{base}}_C
\odot
\boldsymbol{\gamma}_C
+
\boldsymbol{\beta}_C
),
\end{equation}
where $\rho_c$ is initialized to zero.

\paragraph{Progressive Residual Refinement.}

Instead of directly mapping encoder representations to the future sequence
through a static projection, IPD models forecasting as residual refinement
from an intermediate prediction state.

During training, a completion ratio
$r\sim\mathcal{U}(0,1)$ is sampled to construct:
\begin{equation}
\mathbf{y}^{\mathrm{mid}}_C
=
r\mathbf{y}_C
+
\sigma\boldsymbol{\epsilon}_C ,
\qquad
\boldsymbol{\epsilon}_C\sim\mathcal{N}(0,\mathbf I).
\end{equation}
where $\mathbf{y}_C$ is the normalized ground-truth sequence and
$\sigma$ controls the noise magnitude.

The residual correction is predicted as:
\begin{equation}
\Delta\mathbf y_C
=
\mathbf W_{\mathrm{state}}\mathbf y_C^{\mathrm{mid}}
+
\mathbf W_{\mathrm{cond}}\mathbf y_C^{\mathrm{cond}}
+
\mathbf b .
\end{equation}

The final training prediction is:
\begin{equation}
\hat{\mathbf y}_C
=
\mathbf y_C^{\mathrm{mid}}
+
(1-r)\Delta\mathbf y_C .
\end{equation}

\paragraph{Offline-folded Inference.}

During inference, IPD initializes the prediction state as
$\mathbf{y}_C^{(0)}=\mathbf{0}$ and performs $T$ residual updates:
\begin{equation}
\mathbf{y}_C^{(t)}
=
\mathbf{y}_C^{(t-1)}
+
\frac{1}{T}
\left(
\mathbf{W}_{\mathrm{state}}\mathbf{y}_C^{(t-1)}
+
\mathbf{W}_{\mathrm{cond}}\mathbf{y}_C^{\mathrm{cond}}
+
\mathbf{b}
\right),
\quad t=1,\ldots,T.
\end{equation}

Since the residual update is affine in the prediction state and
$\mathbf{y}_C^{\mathrm{cond}}$ remains fixed across iterations,
the $T$ updates can be analytically combined.
Specifically, defining
\begin{equation}
\mathbf{S}_T
=
\frac{1}{T}
\sum_{k=0}^{T-1}
\left(
\mathbf{I}
+
\frac{\mathbf{W}_{\mathrm{state}}}{T}
\right)^k,
\end{equation}
the final prediction becomes
\begin{equation}
\mathbf{y}_C^{(T)}
=
\mathbf{B}_{\mathrm{eff}}
\mathbf{y}_C^{\mathrm{cond}}
+
\mathbf{b}_{\mathrm{eff}},
\end{equation}
where
$\mathbf{B}_{\mathrm{eff}}=\mathbf{S}_T\mathbf{W}_{\mathrm{cond}}$
and $\mathbf{b}_{\mathrm{eff}}=\mathbf{S}_T\mathbf{b}$.
These equivalent parameters are precomputed after training,
allowing IPD to reproduce the progressive refinement through a
single affine mapping during inference.

\section{Experiments}
We evaluate RankFormer on diverse real-world long-term time series
forecasting benchmarks spanning Web- and IoT-relevant applications, demonstrating its superior representational capacity and stable progressive decoding against state-of-the-art (SOTA) baselines.


\paragraph{Datasets.} 
We evaluate our model on nine widely adopted real-world benchmarks:
Electricity, Traffic, Weather, Exchange-Rate, Solar, and four ETT datasets
(ETTh1, ETTh2, ETTm1, ETTm2).
These datasets cover diverse applications relevant to Web and IoT systems,
including transportation, energy management, and environmental monitoring,
alongside financial forecasting.

\paragraph{Baselines.} 
We benchmark RankFormer against 9 recent SOTA forecasting models. To ensure a comprehensive evaluation, these baselines encompass advanced Transformer-based architectures (SEER \cite{qiu2026seer}, iTransformer \cite{liuitransformer}, PatchTST \cite{nietime}, Fredformer \cite{piao2024fredformer}), linear-centric networks (DLinear \cite{zeng2023transformers}), and other highly competitive contemporary approaches (SRSNet \cite{wu2025enhancing}, DUET \cite{qiu2025duet}, xPatch \cite{stitsyuk2025xpatch}, Amplifier \cite{fei2025amplifier}).

\paragraph{Experimental Setup.} 
We evaluate all models under standard long-term forecasting settings, $\tau\in\{96,192,336,720\}$.
Performance is measured
by  Mean Squared Error (MSE) and Mean Absolute Error (MAE). Baseline results are taken from original publications or reproduced using official implementations under the same settings.

\subsection{Main Results}

As shown in Table~\ref{tab1}, we evaluate the long-term forecasting performance of RankFormer against SOTA baselines. All models use a fixed input sequence length of 96, with results averaged across four forecasting horizons. RankFormer demonstrates comprehensive superiority, achieving the best MSE and MAE on 7 out of 8 benchmark datasets (securing 14 out of 16 top metrics). While the recent SEER model shows competitive performance and achieves the best results on the Traffic dataset, RankFormer maintains a strong second place with marginal differences. Compared with established baselines such as PatchTST and iTransformer,
RankFormer achieves lower forecasting errors across diverse applications
relevant to Web and IoT systems, including energy management (Solar)
and environmental monitoring (Weather).
These results demonstrate the strong and consistent forecasting
performance of RankFormer across diverse real-world scenarios.

\subsection{Representation Capacity Analysis}

\begin{figure}[h]
    \centering
    \includegraphics[width=\linewidth]{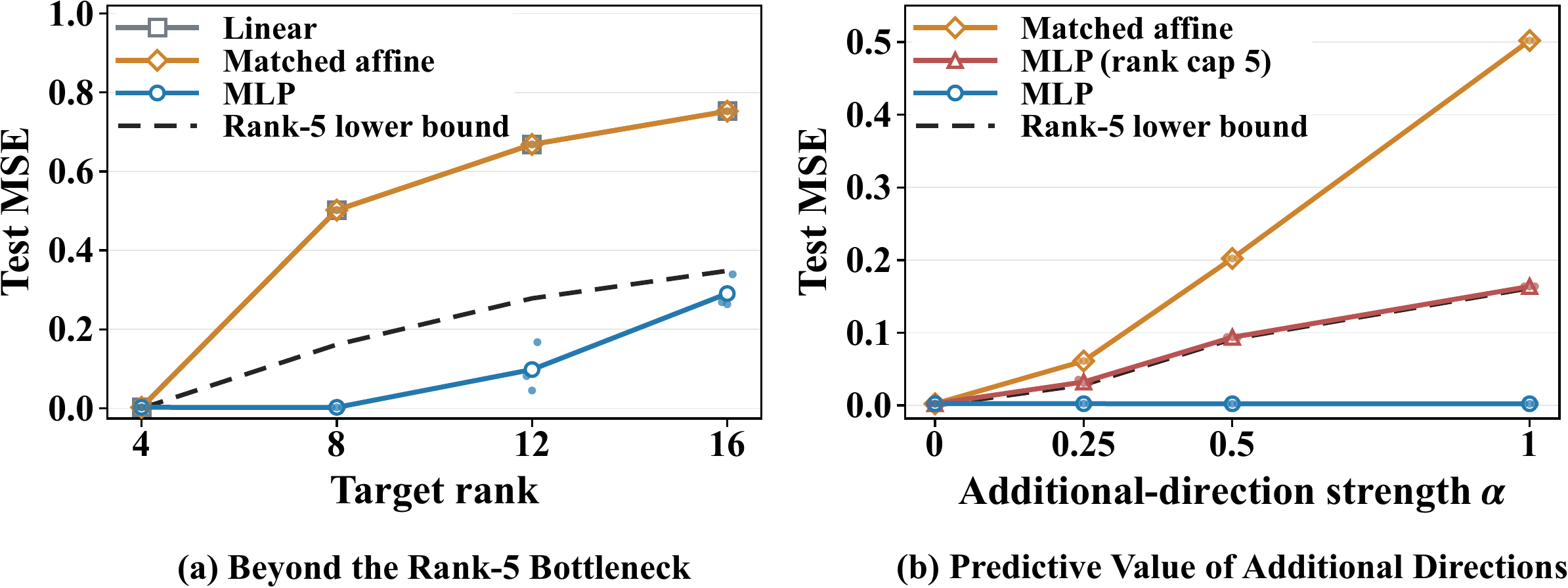}
    \caption{Predictive benefits of additional embedding rank capacity. Lines show three-seed means, small dots show individual seeds, and dashed curves show the MSE lower bound for rank-at-most-five predictions.}
    \label{fig:embedding_rank_capacity}
\end{figure}

\subsubsection{Predictive Value of Embedding Rank Expansion}
\label{sec:embedding_rank_capacity}

We construct synthetic targets with controlled rank to evaluate the
predictive benefit of nonlinear embeddings. We compare linear,
parameter-matched affine, and MLP embeddings using identical attention
and prediction heads that preserve the embedding rank constraints.
Both affine embeddings have a rank upper bound of five.
Figure~\ref{fig:embedding_rank_capacity}(a) shows comparable performance
at target rank 4. At higher target ranks, the MLP achieves MSE below
the rank-five lower bound, while affine embeddings remain constrained.
In (b), increasing the contribution of higher-rank target components
substantially degrades the rank-constrained MLP, whereas the unconstrained
MLP maintains stable performance.
These results demonstrate that nonlinear embeddings can exploit higher
attainable rank to improve prediction when target structures exceed
the linear rank constraint.

\subsubsection{Impact of Representation Rank Constraints}
\label{sec:embedding_rank_constraints}

To investigate the impact of representation rank constraints on forecasting,
we impose rank limits of 1, 4, and 8 on position-enriched patch
representations in a standard Transformer encoder. The constraints are
maintained during training and inference, with all other settings unchanged.
Table~\ref{tab:embedding_pe_rank_constraints} shows that rank-one and
rank-four constraints consistently increase MSE and MAE across datasets,
whereas the relaxed rank-eight constraint largely restores performance
toward the unconstrained baseline. Notably, this degradation persists
despite the presence of FFNs and residual connections.
These results demonstrate that restrictive representation rank constraints
can impair forecasting performance, supporting the importance of
preserving sufficient independent representation directions.

\begin{table}[t]
    \centering
    \caption{Forecasting performance under different rank constraints on
position-enriched embeddings. The best results are in bold.}
    \label{tab:embedding_pe_rank_constraints}
    \small
    \resizebox{\linewidth}{!}{%
    \begin{tabular}{c|cc|cc|cc|cc}
        \toprule
        \multirow{2}{*}{Dataset}
        & \multicolumn{2}{c|}{Unconstrained}
        & \multicolumn{2}{c|}{Rank $\leq 1$}
        & \multicolumn{2}{c|}{Rank $\leq 4$}
        & \multicolumn{2}{c}{Rank $\leq 8$} \\
        & MSE & MAE & MSE & MAE & MSE & MAE & MSE & MAE \\
        \midrule
        ETTh1
        & \textbf{0.503} & \textbf{0.469}
        & 0.549 & 0.511
        & 0.509 & 0.473
        & 0.504 & 0.472 \\
        ETTh2
        & \textbf{0.380} & 0.414
        & 0.385 & 0.417
        & 0.398 & 0.421
        & 0.383 & \textbf{0.414} \\
        ETTm1
        & \textbf{0.369} & \textbf{0.390}
        & 0.385 & 0.402
        & 0.372 & 0.393
        & 0.372 & 0.392 \\
        ETTm2
        & 0.271 & 0.326
        & 0.273 & 0.327
        & 0.274 & 0.327
        & \textbf{0.270} & \textbf{0.324} \\
        Weather
        & \textbf{0.228} & \textbf{0.263}
        & 0.231 & 0.266
        & 0.231 & 0.266
        & 0.229 & 0.265 \\
        Exchange
        & \textbf{0.391} & \textbf{0.423}
        & 0.430 & 0.443
        & 0.416 & 0.437
        & 0.397 & 0.425 \\
        \bottomrule
    \end{tabular}%
    }
\end{table}

\begin{figure}[t]
    \centering
    \includegraphics[width=\linewidth]{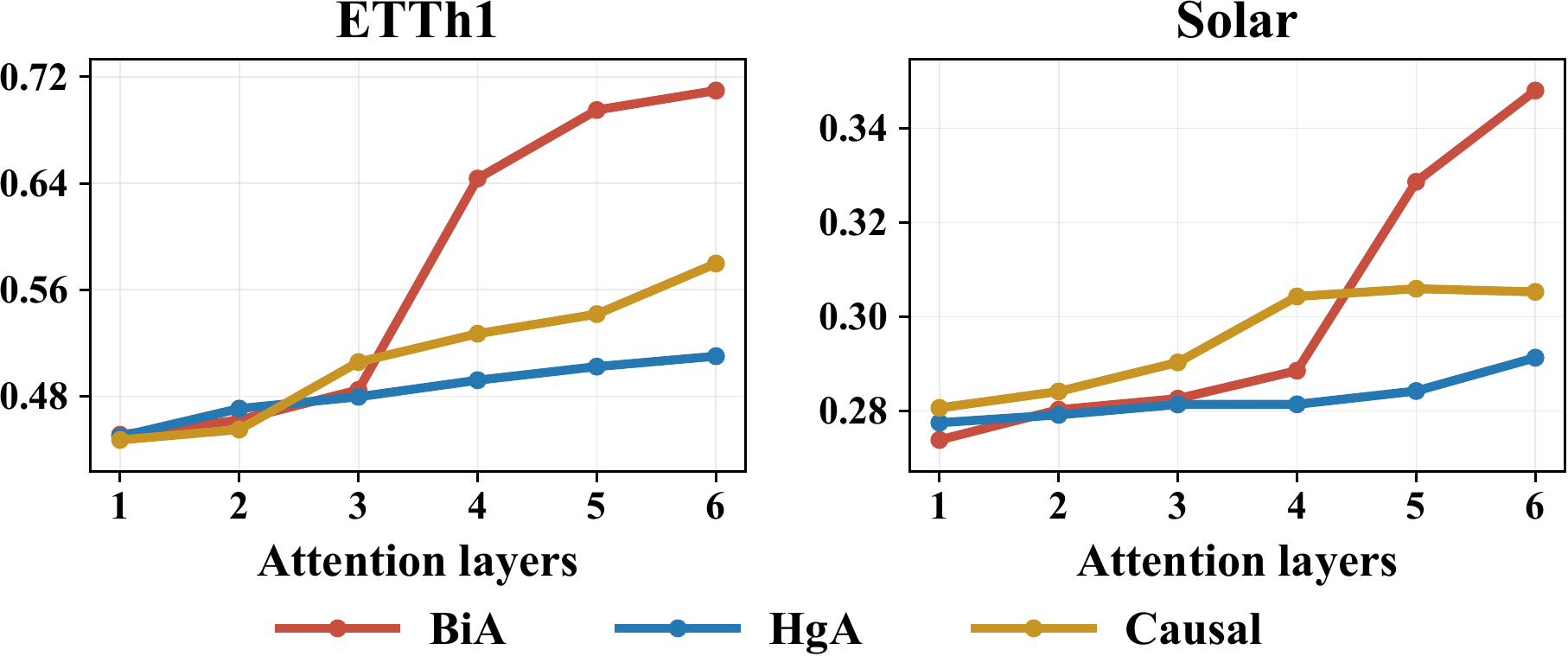}
    \caption{Effect of Network Depth on Forecasting with Different Pure Attention Mechanisms.}
    \label{fig:hga1_depth_mse_ETTh1_Solar}
\end{figure}

\begin{table}[t]
    \centering
    \caption{Performance comparison of six-layer pure-attention models.
    The best results are in bold.}
    \label{tab:pure_attention_bia_hga}
    \small
    \resizebox{\linewidth}{!}{%
    \begin{tabular}{ccccccc}
        \toprule
        Method & Metric & ETTh1 & ETTh2 & ETTm1
        & Weather & Exchange \\
        \midrule
        \multirow{2}{*}{BiA}
        & MSE & 0.710 & 0.418 & 0.648 & 0.279 & 0.375 \\ 
        & MAE & 0.583 & 0.433 & 0.530 & 0.303 & 0.414 \\
        \midrule
        \multirow{2}{*}{HgA}
        & MSE & \textbf{0.510} & \textbf{0.407} & \textbf{0.421}
        & \textbf{0.273} & \textbf{0.371} \\
        & MAE & \textbf{0.480} & \textbf{0.428} & \textbf{0.422}
        & \textbf{0.293} & \textbf{0.412} \\
        \bottomrule
    \end{tabular}
    }
\end{table}

\subsubsection{Robustness to Increasing Attention Depth}

To evaluate the impact of attention depth, we compare BiA, causal attention,
and HgA in pure-attention networks with one to six stacked attention layers.
All variants share the same architecture except for the attention mechanism.
Figure~\ref{fig:hga1_depth_mse_ETTh1_Solar} shows that BiA suffers from
substantial performance degradation as attention depth increases, whereas
HgA maintains more stable forecasting performance. Causal attention also
improves depth robustness, but its performance varies across datasets.
Table~\ref{tab:pure_attention_bia_hga} further demonstrates that HgA
achieves consistent improvements over BiA across multiple datasets,
especially in deeper attention stacks.
These results indicate that HgA improves robustness under increasingly deep
attention interactions by preserving more stable representations. This
empirical behavior is consistent with the rank-preservation analysis,
showing that attention topology plays an important role in maintaining
representation diversity during deep information exchange.

\subsubsection{Effect of Training-Time Attention Rank Constraints}

To investigate how attention rank constraints affect forecasting performance,
we impose rank constraints $\operatorname{rank}(\mathbf{A})\leq r$ with
$r\in\{1,3\}$ on every attention head of a six-layer HgA model. The
constrained models are trained under the same settings as the unconstrained
baseline.
Table~\ref{tab:hga_training_rank_constraints} shows that restricting
attention rank consistently degrades forecasting performance across
datasets. Severe rank constraints ($r=1$) lead to substantial performance
drops, while a relaxed constraint ($r=3$) reduces the degradation. 
These results demonstrate that overly restrictive attention rank constraints
can impair the effectiveness of HgA, supporting the role of maintaining
sufficient representation diversity during attention interaction. 

\begin{table}[t]
    \centering
    \caption{Training-Time Attention Rank Constraints in Six-Layer Pure HgA.
    For the rank-constrained columns, values are percentage changes
    relative to the Unconstrained model.}
    \label{tab:hga_training_rank_constraints}
    \small
    \renewcommand{\arraystretch}{1}
    \resizebox{\linewidth}{!}{%
    \begin{tabular}{c|cc|cc|cc}
        \toprule
        \multirow{2}{*}{Dataset} & \multicolumn{2}{c|}{Unconstrained}
        & \multicolumn{2}{c|}{$\mathrm{rank}(A)\leq 1$}
        & \multicolumn{2}{c}{$\mathrm{rank}(A)\leq 3$} \\
        & MSE & MAE & MSE & MAE & MSE & MAE \\
        \midrule
        ETTh1
        & 0.510 & 0.481
        & +12.44 & +7.36
        & +0.23 & +0.04 \\
        ETTh2
        & 0.406 & 0.427
        & +0.93 & -0.55
        & +1.88 & +0.72 \\
        ETTm1
        & 0.422 & 0.421
        & +37.72 & +19.04
        & +7.27 & +3.89 \\
        ETTm2
        & 0.293 & 0.338
        & +6.92 & +4.07
        & +3.60 & +2.02 \\
        Weather
        & 0.273 & 0.293
        & +6.83 & +6.70
        & +4.47 & +3.93 \\
        Exchange
        & 0.372 & 0.413
        & +8.04 & +5.66
        & +3.97 & +2.71 \\
        Solar
        & 0.306 & 0.326
        & +50.40 & +40.08
        & +8.53 & +7.52 \\
        \bottomrule
    \end{tabular}
    }
\end{table}

\begin{figure}[t]
    \centering
    \includegraphics[width=\linewidth]{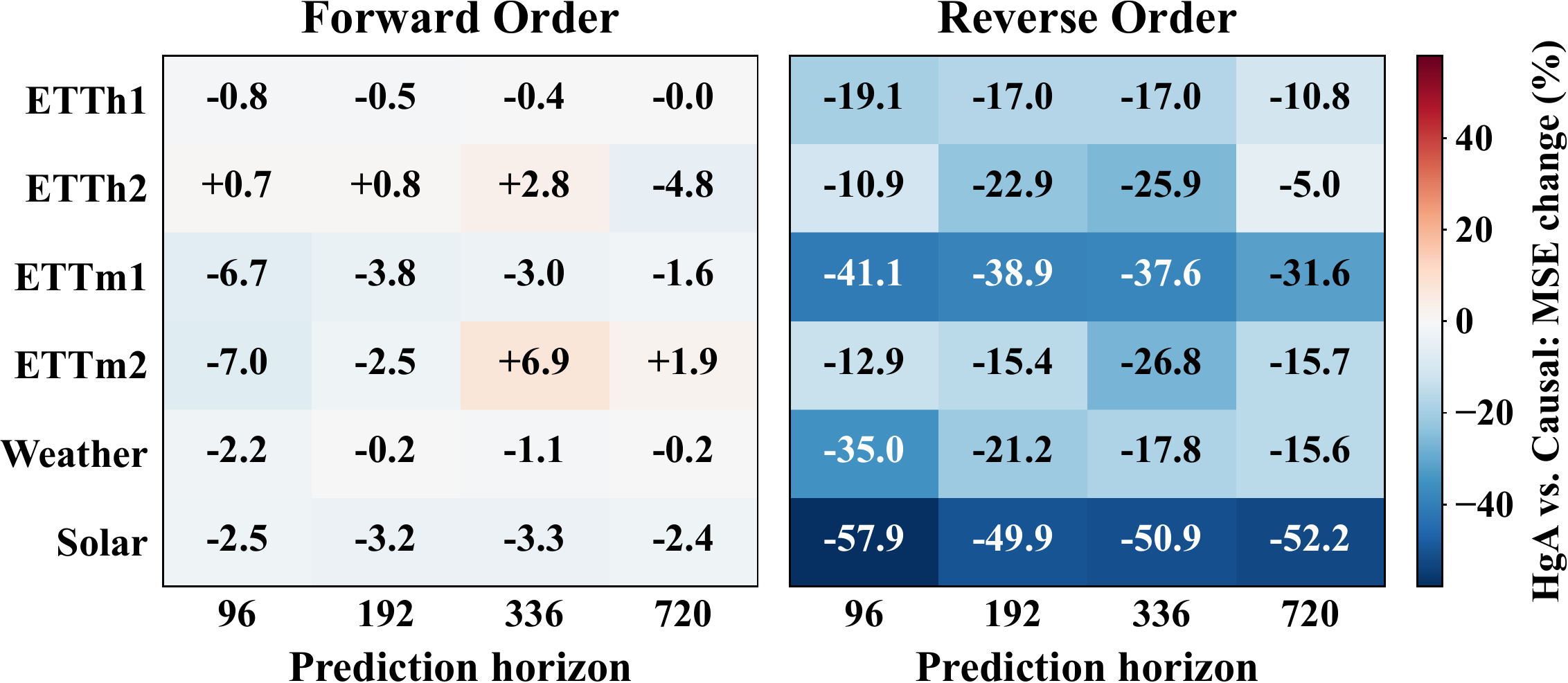}
    \caption{Forecasting Comparison of HgA and Causal Attention under Different Token Orders.}
    \label{fig:hga2_complete_tasks_no_exchange}
\end{figure}

\subsubsection{Predictive Value of Information Pathways}

To evaluate the predictive value of HgA's information pathways, we compare
HgA with causal attention under forward and reversed token orders, using
the same original latest patch as the prediction readout in both settings.
Reversing token order changes the information accessible through causal
attention, allowing us to assess the contribution of HgA's additional
cross-token interactions.
Figure~\ref{fig:hga2_complete_tasks_no_exchange} shows that HgA
outperforms causal attention on most forward-order tasks, with more
pronounced improvements under reversed order, where HgA achieves lower
MSE on all 24 tasks shown.
These results highlight the predictive value of HgA's information
pathways, demonstrating its ability to effectively integrate cross-token
information while maintaining rank stability.

\subsubsection{Empirical Validation of Rank Theory}

\begin{figure}[t]
    \centering
    \includegraphics[scale=0.23]{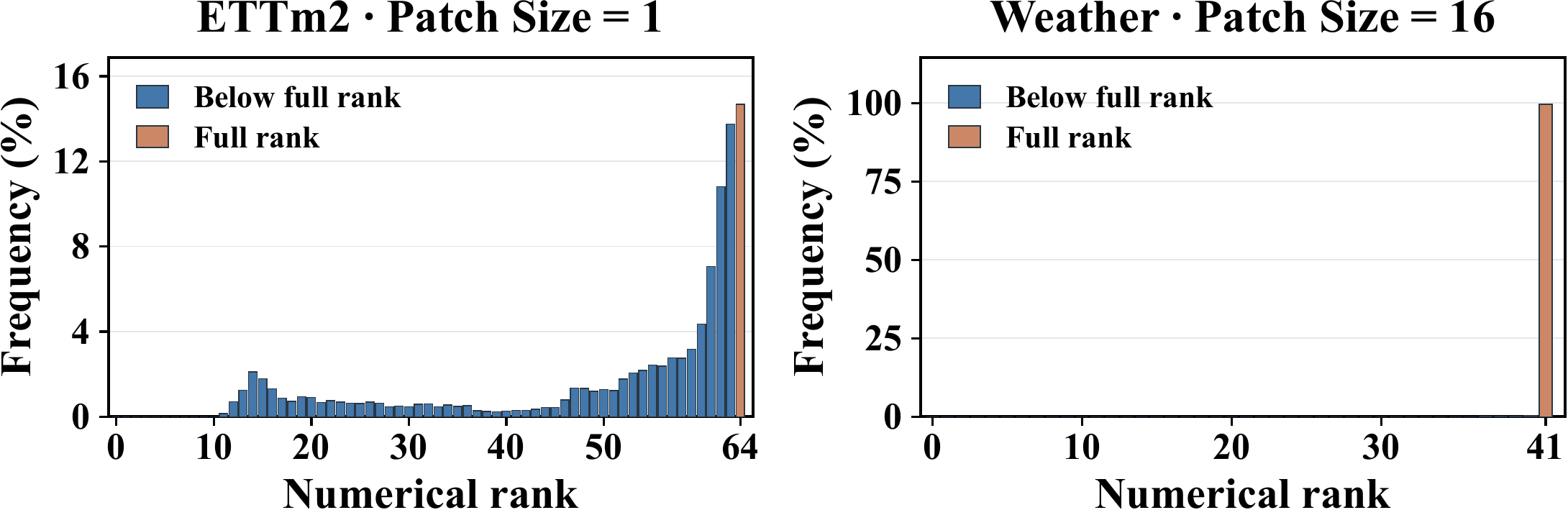} \\ 
    \caption{Rank distributions of embedding matrices with patch sizes $P=1$ (ETTm2) and $P=16$ (Weather).}
    \label{rank_distribution_emb}
\end{figure}

\begin{figure}[t]
    \centering
    \includegraphics[scale=0.225]{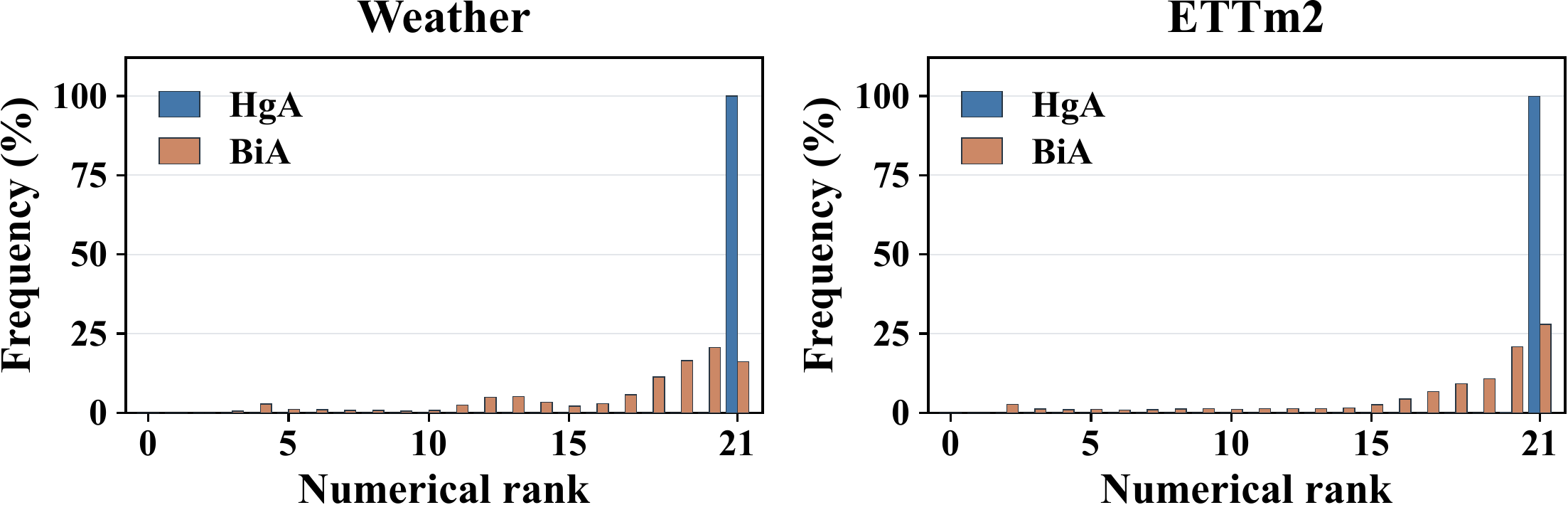}
    \caption{Rank distributions of attention matrices on Weather and ETTm2.}
    \label{rank_distribution_atten}
\end{figure}

To empirically validate our theoretical framework, we visualize the rank distributions of intermediate representations in Figure~\ref{rank_distribution_emb} and ~\ref{rank_distribution_atten}. 
First, the Figure~\ref{rank_distribution_emb} demonstrate that our nonlinear MLP embedding explicitly breaks the rank-1 bottleneck inherent to $P=1$ linear projections. It consistently drives the representations toward high-rank states, closely approaching the theoretical maximum (validating Theorem~\ref{thm:max_rank_embedding}). 
Second, the Figure~\ref{rank_distribution_atten} compare the attention weight matrices. While standard BiA suffers from severe rank collapse, evidenced by a widespread long-tail distribution toward lower ranks, HgA consistently locks at the maximum rank across all test samples. This confirms that HgA effectively preserves representational diversity and circumvents rank-deficient topologies, perfectly aligning with Theorem~\ref{thm:causal_rank_advantage}.

\subsection{Interactive Progressive Decoder Analysis}

\subsubsection{Analysis of Progressive Refinement}
\label{sec:refinement_analysis}

We investigate the effectiveness of progressive refinement by examining
prediction accuracy, error correction, and inference efficiency.
Using fixed checkpoints, we evaluate different refinement steps
$T\in\{1,\ldots,20\}$ across multiple forecasting horizons.
Figure~\ref{fig:T_sensitivity} shows that ETTm1 benefits primarily from
a few refinement steps, whereas Exchange continues to improve as $T$
increases, indicating dataset-dependent sensitivity to refinement depth.

To understand what refinement corrects, we decompose forecasting errors
into level, linear-trend, and remaining-shape components.
Figure~\ref{fig:T_error_components} shows that increasing $T$ from 1 to 5
improves MSE and MAE across all six evaluated datasets at $\tau=96$,
with different dominant correction components across datasets.

Finally, we evaluate the efficiency of offline-folded inference.
The folded predictions match explicit iterative updates within numerical
precision while avoiding online iterations.
As shown in Figure~\ref{fig:folding_latency}, at $T=20$, offline folding
achieves median speedups of $3.77\times$ for the refinement module and
$1.45\times$ for the full model.
These results demonstrate that IPD enables effective progressive
forecast correction with efficient inference.

\begin{figure}[t]
    \centering
    \includegraphics[scale=0.18]{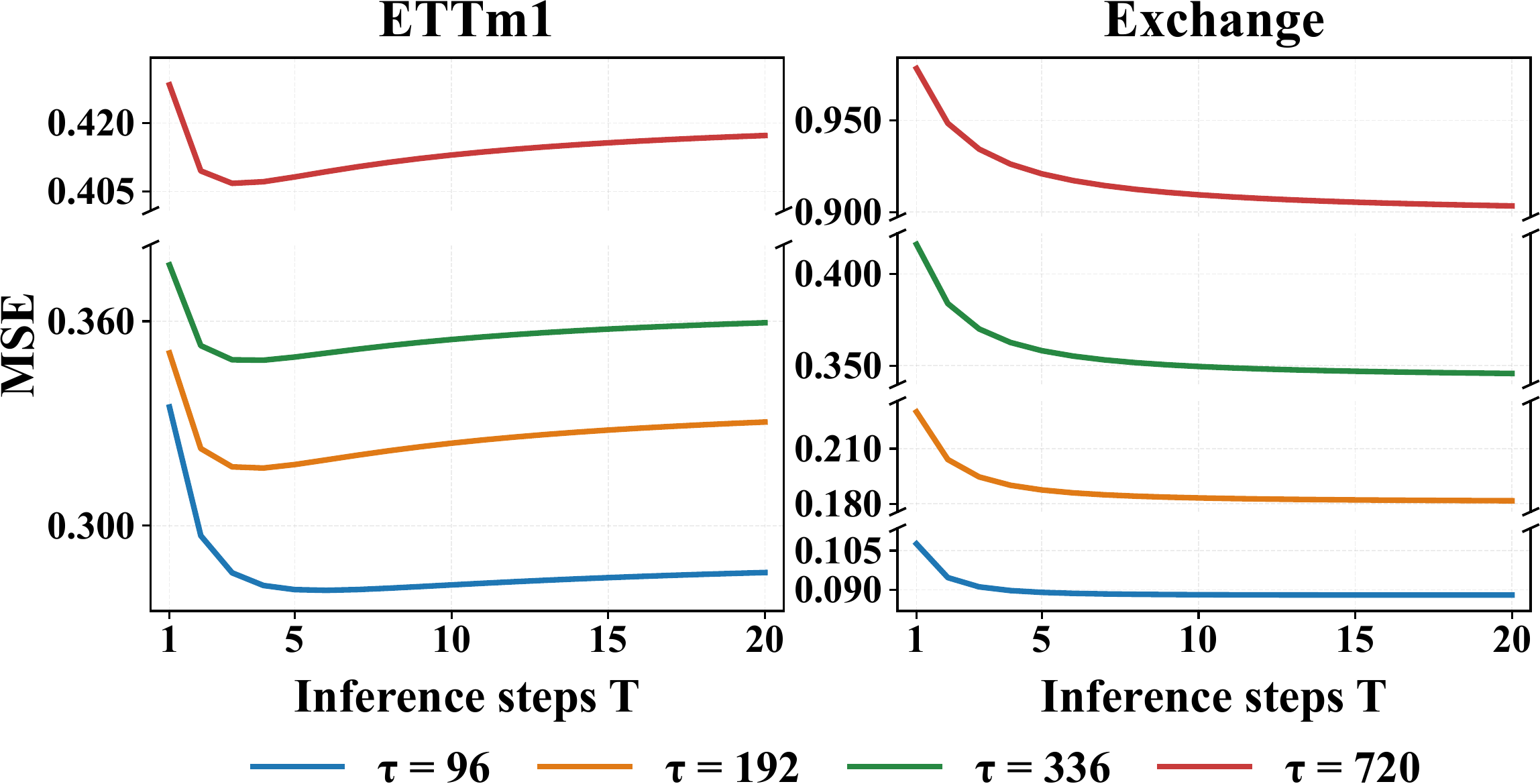}
    \caption{Hyperparameter sensitivity of refinement steps ($T$).}
    \label{fig:T_sensitivity}
\end{figure}

\begin{figure}[t]
    \centering
    \includegraphics[scale=0.17]{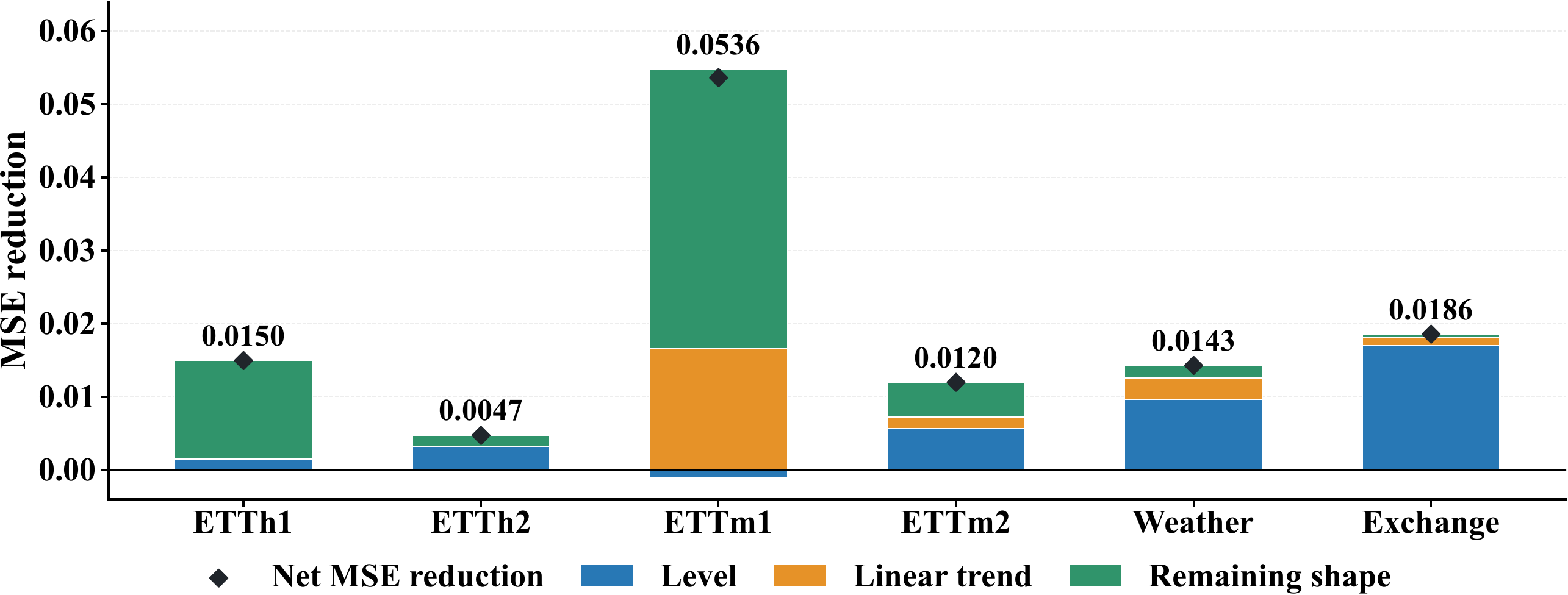}
    \caption{Reductions in level, trend, and remaining-shape errors
from $T=1$ to $T=5$ at $\tau=96$.}
    \label{fig:T_error_components}
\end{figure}

\begin{figure}[t]
    \centering
    \includegraphics[scale=0.2]{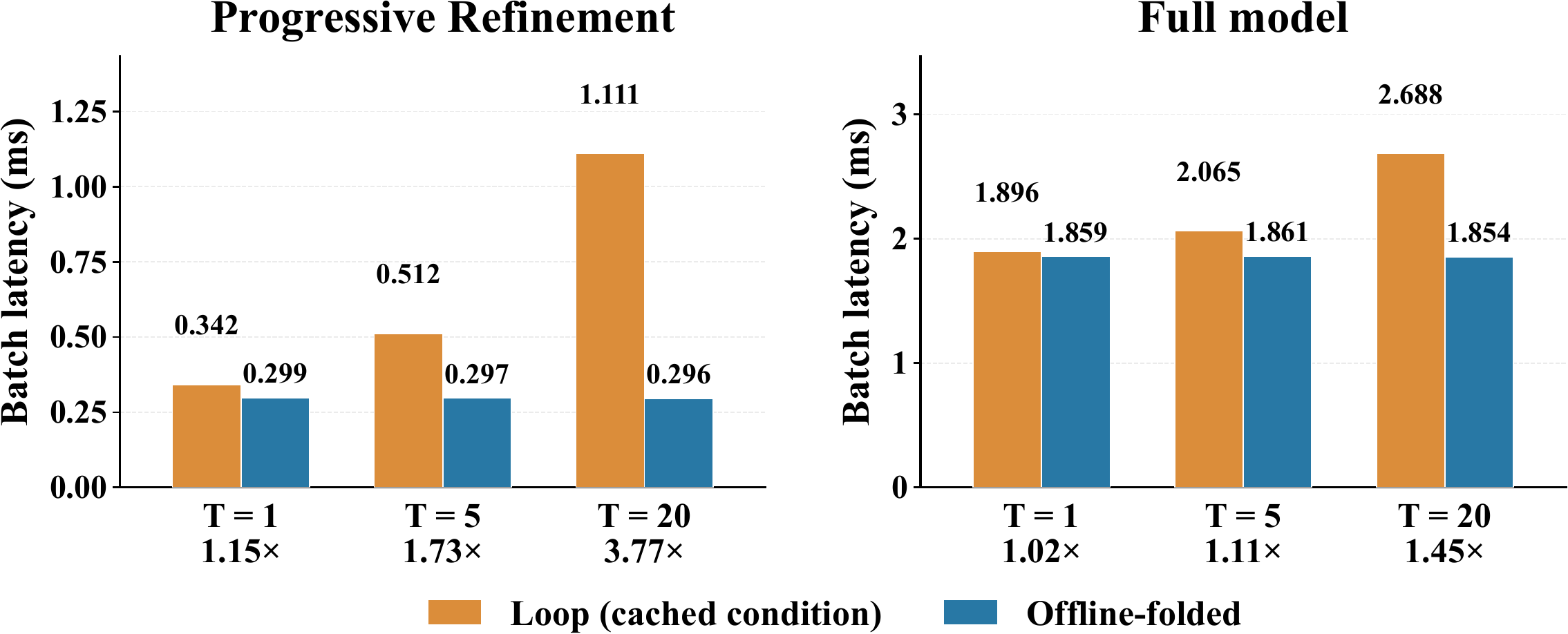}
    \caption{Online inference latency of iterative refinement and
offline-folded inference.}
    \label{fig:folding_latency}
\end{figure}


\subsubsection{Effectiveness of IPD Components.}

To evaluate the contributions of IPD components, we compare three
prediction heads under the same encoder: a standard linear head,
progressive residual refinement without conditioning (Refinement Only),
and the complete IPD with interactive conditioning.
Table~\ref{tab:prediction_head_comparison} shows that residual refinement
alone improves MSE over the linear head on five of six datasets.
Introducing interactive conditioning further improves performance,
with Full IPD achieving the lowest MSE and MAE across all six datasets.
Notably, on Weather, refinement alone slightly underperforms the linear
head, whereas Full IPD improves upon both.
These results demonstrate the complementary benefits of progressive
residual refinement and interactive conditioning for multivariate
forecasting.

\begin{table}[t]
\centering
\caption{Comparison of IPD component configurations.}
\label{tab:prediction_head_comparison}
\begin{tabular}{l|cc|cc|cc}
\toprule
\multirow{2}{*}{Dataset}
& \multicolumn{2}{c|}{Linear Head}
& \multicolumn{2}{c|}{Refinement Only}
& \multicolumn{2}{c}{Full IPD} \\
& MSE & MAE & MSE & MAE & MSE & MAE \\
\midrule
ETTh1
& 0.426 & 0.430
& 0.412 & 0.421
& \textbf{0.405} & \textbf{0.416} \\
ETTh2
& 0.349 & 0.385
& 0.344 & 0.384
& \textbf{0.342} & \textbf{0.384} \\
ETTm1
& 0.349 & 0.368
& 0.342 & 0.367
& \textbf{0.339} & \textbf{0.367} \\
ETTm2
& 0.254 & 0.307
& 0.253 & 0.308
& \textbf{0.252} & \textbf{0.307} \\
Weather
& 0.228 & 0.254
& 0.230 & 0.257
& \textbf{0.226} & \textbf{0.252} \\
Exchange
& 0.405 & 0.426
& 0.395 & 0.421
& \textbf{0.389} & \textbf{0.417} \\
\bottomrule
\end{tabular}
\end{table}

\begin{figure}[t]
    \centering
    \includegraphics[scale=0.14]{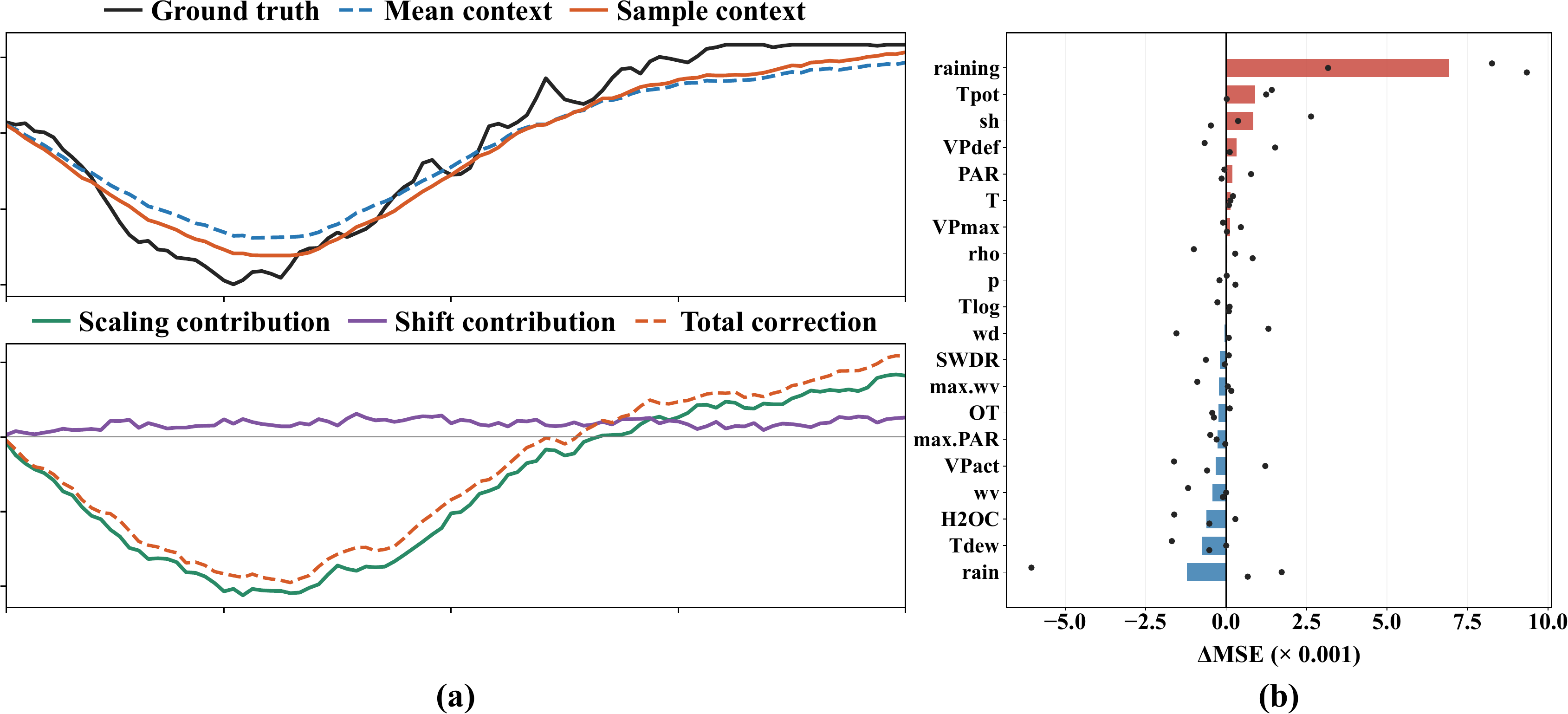}
    \caption{Cross-channel conditioning analysis on Weather.}
    \label{fig:cross_channel_modulation}
\end{figure}

\subsubsection{Effect of Cross-channel Conditioning.}

To investigate how cross-channel information contributes to forecasting,
we analyze the relative-humidity channel on Weather by replacing source
channel features with their training-set means.
Figure~\ref{fig:cross_channel_modulation}(a) illustrates how
sample-dependent cross-channel information adjusts the predicted
trajectory, particularly around the trough and later forecast positions.
The scaling modulation contributes most of the adjustment in this example,
while the shifting modulation provides smaller corrections.
Figure~\ref{fig:cross_channel_modulation}(b) further reveals varying
contributions from different source channels. The \texttt{raining}
channel shows the largest positive contribution.
These findings demonstrate that IPD can exploit informative cross-channel
relationships to produce adaptive temporal corrections, supporting the
effectiveness of interactive conditioning.

\section{Conclusion}
In this paper, we present RankFormer, a principled encoder-decoder
architecture for multivariate time series forecasting. 
Through a rank-based analytical framework, we revisit how tokenization,
embedding, and attention designs influence the capacity of learned
representations. 
Our analysis reveals that patchification expands the attainable rank bound,
nonlinear embeddings overcome the additional constraints of linear
projections, and attention topology plays an important role in preserving
representation diversity during information interaction. 
Guided by these insights, we develop a rank-enhancing encoder with HgA and an
IPD, which respectively improve
representation preservation and future trajectory modeling through effective
information exchange and progressive refinement. 
Extensive experiments on real-world forecasting benchmarks validate the
effectiveness of RankFormer and demonstrate consistent improvements over
competitive methods.

\bibliographystyle{ACM-Reference-Format}
\bibliography{sample-base}

\appendix

\begin{figure}[ht]
    \centering
    \includegraphics[scale=0.6]{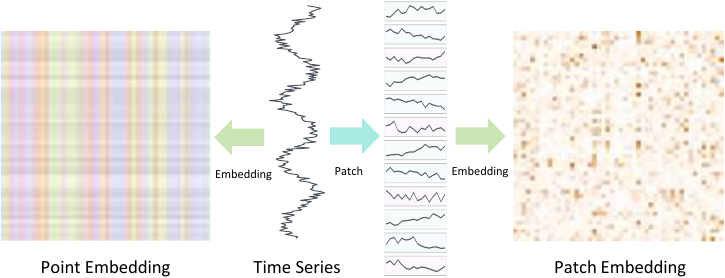}
    \caption{Rank expansion via patchification.}
    \label{fig_fwf}
\end{figure}

\section{Related Work}

\subsection{Time Series Forecasting}
Time series forecasting faces challenges in modeling long-term dependencies and complex dynamics. Traditional RNNs and CNNs suffer from scalability bottlenecks and difficulties in handling long sequences \cite{cheng2020towards, salinas2020deepar}. In contrast, Transformers have achieved superior performance by leveraging self-attention mechanisms, spurring the development of efficient and interpretable temporal models \cite{wutimesnet, liu2022pyraformer} such as Informer \cite{zhou2021informer}, Autoformer \cite{wu2021autoformer}, and PatchTST \cite{nietime}.
Meanwhile, frequency-domain methods effectively capture periodic and spectral patterns in time series through Fourier or wavelet transforms \cite{murad2025wpmixer}. Representative works, including Fredformer \cite{piao2024fredformer}, FreTS \cite{yi2023frequency}, and FITS \cite{xufits}, demonstrate competitive advantages in extracting global features.
Recent studies further integrate temporal and frequency-domain information, proposing hybrid modeling approaches. For example, CDX-Net combines multiple network structures for feature fusion \cite{li2022cdx}, FEDformer unifies decomposition and frequency analysis \cite{zhou2022fedformer}, and TimeMixer++ enhances temporal representation through multi-scale sampling and attention mechanisms, advancing the field \cite{wang2025timemixer++}.
\subsection{Rank Analysis in Transformer}
Low-rank attention research aims to improve Transformer efficiency through low-rank approximations or parameterization. Low-rank approximation methods, such as Linformer \cite{wang2020linformer}, Nystromformer \cite{xiong2021nystromformer}, and Performer \cite{choromanskirethinking}, utilize low-dimensional projections, anchor tokens, or random feature mappings to reduce attention computational complexity to linear. Low-rank parameterization methods \cite{hu2022lora, hayou2024lora+, liu2024dora}, directly learn low-rank update matrices for efficient fine-tuning but lack theoretical guidance on the underlying causes of low-rank behavior.
Theoretical analysis reveals that pure attention networks suffer from a double-exponential collapse in representation rank as depth increases, a phenomenon known as rank collapse \cite{noci2022signal, saadamind}. Studies indicate that residual connections, layer normalization, and feed-forward networks can effectively mitigate this issue and maintain representational diversity \cite{dong2021attention, wu2024role}. Additionally, the relationship between attention head dimension and sequence length influences the rank of attention matrices, thereby constraining model expressiveness \cite{bhojanapalli2020low}.

In time series forecasting, sequence embeddings exhibit rapidly decaying singular value spectra, naturally concentrating in low-rank subspaces and endowing attention layers with inherent compressibility \cite{yu2025understanding}. However, existing work still lacks a unified theoretical framework to systematically guide the overall design and optimization of time series Transformers from a rank perspective.

\section{Visualization of Rank Expansion via Patching}
\label{app:patch_visualization}

Figure~\ref{fig_fwf} illustrates the rank expansion effect discussed in
Section~\ref{sec:rank_theory}. 
Point-wise tokenization ($P=1$) limits the representation to a rank-1 structure, while
patchification enlarges the attainable rank bound to $\min(N,P)$ as stated
in Proposition~\ref{prop:patching_rank}, providing additional independent
directions for subsequent representation learning.

\section{Proof of Theorem~\ref{thm:max_rank_embedding}}
\label{app:proof_thm1}

We consider the two-layer MLP embedding:
\[
\mathbf{H}
=
\phi(\mathbf{X}_p\mathbf{W}_1+\mathbf{1}_N\mathbf{b}_1^\top)
\mathbf{W}_2+\mathbf{1}_N\mathbf{b}_2^\top ,
\]
where $\mathbf{X}_p\in\mathbb{R}^{N\times P}$ denotes the input patch
matrix and $\phi$ is a continuous non-polynomial activation function. We
show that there exist MLP parameters such that
\[
\operatorname{rank}(\mathbf{H})=\min(N,D).
\]

\subsection{Spanning Property of Non-polynomial Activations}

We first establish a property of non-polynomial activations. For any
distinct input points $\{x_i\}_{i=1}^{N}\subset\mathbb{R}^{P}$, define
\[
\Phi(w,b)=
[
\phi(w^\top x_1+b),\ldots,
\phi(w^\top x_N+b)
]^\top .
\]

\begin{lemma}
\label{lem2:activation_spanning}
The set of vectors $\{\Phi(w,b):w\in\mathbb{R}^{P},b\in\mathbb{R}\}$
spans $\mathbb{R}^{N}$.
\end{lemma}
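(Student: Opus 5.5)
We argue by duality: suppose the span $S$ of $\{\Phi(w,b)\}$ is a proper subspace of $\mathbb{R}^N$. Then there is a nonzero vector $c\in\mathbb{R}^N$ orthogonal to $S$, i.e.
\[
\sum_{i=1}^{N} c_i\,\phi(w^\top x_i+b)=0\qquad\text{for all } w\in\mathbb{R}^P,\ b\in\mathbb{R}.
\]
The goal is to derive a contradiction from this identity, using only that $\phi$ is continuous and non-polynomial and that the $x_i$ are distinct.

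The first step reduces to one dimension. Since the $x_i$ are distinct, the finitely many hyperplanes $\{v: v^\top(x_i-x_j)=0\}$, $i\neq j$, do not cover $\mathbb{R}^P$. Hence we can fix a direction $v$ for which the scalars $t_i=v^\top x_i$ are pairwise distinct. Substituting $w=sv$ gives
\[
g(s,b):=\sum_i c_i\,\phi(s t_i+b)\equiv 0\qquad\text{for all } s,b\in\mathbb{R}.
\]

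The second step extracts polynomial moments, and it is the main obstacle because $\phi$ is only continuous. I would first mollify: set $\phi_\varepsilon=\phi*\eta_\varepsilon$ with a smooth compactly supported $\eta_\varepsilon$. The identity is preserved under convolution in $b$, so $\sum_i c_i\phi_\varepsilon(st_i+b)\equiv0$ with $\phi_\varepsilon\in C^\infty$.

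A standard fact (the Leshno–Lin–Pinkus–Schocken argument) is that if $\phi$ is not a polynomial, then for a suitable $\varepsilon$ the function $\phi_\varepsilon$ is not a polynomial either. If every $\phi_\varepsilon$ were a polynomial, their degrees would be bounded by a Baire category argument on $C(\mathbb{R})$, and the uniform limit $\phi$ would then be a polynomial, a contradiction. For a non-polynomial smooth $\psi=\phi_\varepsilon$, no derivative $\psi^{(k)}$ vanishes identically. Hence for each $k$ there is a $b_k$ with $\psi^{(k)}(b_k)\neq0$. Differentiating the identity $k$ times in $s$ at $s=0$, $b=b_k$, gives
\[
\psi^{(k)}(b_k)\sum_i c_i t_i^k=0,\qquad\text{so}\qquad \sum_i c_i t_i^k=0 \quad\text{for } k=0,\dots,N-1.
\]

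The third step concludes. The coefficient matrix of this linear system in $c$ is the Vandermonde matrix $(t_i^k)$ with distinct nodes $t_i$, so it is invertible. Therefore $c=0$, contradicting the choice of $c$, and the vectors $\Phi(w,b)$ span $\mathbb{R}^N$.

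To keep the proof self-contained, I would include a brief sketch of the mollifier and Baire step, or simply cite Leshno et al.
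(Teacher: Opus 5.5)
Your proof is correct, and its first half matches the paper's. You take a nonzero $c$ orthogonal to the span, choose $v$ so that the $t_i=v^\top x_i$ are pairwise distinct, and reduce to $\sum_i c_i\phi(st_i+b)\equiv 0$ in $(s,b)$. The routes separate at the last step. The paper stops there and appeals to a ``discriminatory property'' of continuous non-polynomial activations, stated loosely as ``no non-trivial linear combination of translated and scaled copies vanishes identically''. Read as a claim about finitely many fixed dilates and translates, that is false: for ReLU, $\mathrm{ReLU}(x)-\mathrm{ReLU}(-x)-\frac{1}{2}\mathrm{ReLU}(2x)+\frac{1}{2}\mathrm{ReLU}(-2x)\equiv 0$. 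What is actually needed is linear independence of the functions $(s,b)\mapsto\phi(st_i+b)$ for distinct $t_i$. The paper takes this as a black box; in substance it is Cybenko's discriminatory property, which holds for continuous non-polynomial $\phi$ by the Leshno--Lin--Pinkus--Schocken theorem. You prove exactly that statement: mollify in $b$, differentiate $k$ times in $s$ at $s=0$ to get $\sum_i c_i t_i^k=0$, and conclude with the Vandermonde determinant. The paper's version is shorter. Yours is self-contained and shows where the distinctness of the $x_i$ and the hypothesis on $\phi$ each enter.

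Spelling it out also shows you use less than you invoke. Only $\psi^{(k)}\not\equiv 0$ for $k\le N-1$ is needed. So it suffices that some $\phi*\eta_\varepsilon$ is not a polynomial of degree at most $N-2$, and this requires no Baire argument. Indeed, $\phi*\eta_\varepsilon\to\phi$ locally uniformly, and the polynomials of degree at most $N-2$ form a finite-dimensional, hence closed, subspace of $C([-R,R])$. Leshno et al.\ need a Baire step only to control derivatives of all orders at once. Their step runs over the test-function space $C_c^\infty([-\alpha,\alpha])$, not over $C(\mathbb{R})$ or a single family $\eta_\varepsilon$ as in your sketch. As a by-product, your argument proves the lemma for every continuous $\phi$ that is not a polynomial of degree at most $N-2$, which is sharper than the paper's hypothesis.
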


\begin{proof}
Assume that the image of $\Phi$ does not span $\mathbb{R}^{N}$. Then there
exists a non-zero vector $\alpha\in\mathbb{R}^{N}$ satisfying
\[
\sum_{i=1}^{N}\alpha_i\phi(w^\top x_i+b)=0
\]
for all $w$ and $b$.

Since the input points are distinct, there exists a direction
$v\in\mathbb{R}^{P}$ such that the projected values
$\beta_i=v^\top x_i$ are pairwise distinct. Setting $w=tv$ gives
\[
\sum_{i=1}^{N}\alpha_i\phi(t\beta_i+b)=0,
\]
for all $t,b$. This contradicts the discriminatory property of continuous
non-polynomial activations, which states that no non-trivial linear
combination of their translated and scaled functions can vanish
identically. Therefore, the vectors $\Phi(w,b)$ span $\mathbb{R}^{N}$.
\end{proof}

\subsection{Maximum Rank Achievability}

Let
\[
K=\min(N,D).
\]
By Lemma~\ref{lem2:activation_spanning}, the activation outputs span
$\mathbb{R}^{N}$, and therefore there exist $K$ linearly independent vectors
\[
v_1,\ldots,v_K
\]
generated by suitable parameters $(w_j,b_j)$, where
\[
v_j=
[
\phi(w_j^\top x_1+b_j),\ldots,
\phi(w_j^\top x_N+b_j)
]^\top .
\]

Construct the first MLP layer using
\[
\mathbf{W}_1=[w_1,\ldots,w_K],
\qquad
\mathbf{b}_1=[b_1,\ldots,b_K]^\top .
\]

The hidden representation becomes
\[
\mathbf{H}_{hid}
=
\phi(\mathbf{X}_p\mathbf{W}_1+
\mathbf{1}_N\mathbf{b}_1^\top)
=
[v_1,\ldots,v_K],
\]
which satisfies
\[
\operatorname{rank}(\mathbf{H}_{hid})=K.
\]

Finally, choosing the output projection $\mathbf{W}_2$ to map these
independent hidden directions into the output space yields
\[
\operatorname{rank}(\mathbf{H})
=
K
=
\min(N,D).
\]

Therefore, nonlinear MLP embeddings can achieve the maximum attainable rank
under the given sample and hidden dimensions.

\section{Attention Rank Analysis}
\label{app:attention_rank_analysis}

We analyze how attention topology affects the rank preservation of value
representations during token interaction. Given query, key, and value projections:
\[
\mathbf{Q}=\mathbf{X}\mathbf{W}_Q,\quad
\mathbf{K}=\mathbf{X}\mathbf{W}_K,\quad
\mathbf{V}=\mathbf{X}\mathbf{W}_V ,
\]
the attention output after output projection can be written as
\[
\mathbf{O}
=
\mathbf{A}\mathbf{X}_V,
\]
where
\[
\mathbf{X}_V=\mathbf{V}\mathbf{W}^{O},
\] and $\mathbf{A}$ denotes the corresponding attention aggregation matrix.

The following analysis focuses on whether the attention aggregation matrix
$\mathbf{A}$ preserves the rank of the value representation.

\subsection{Full-rank Potential of Bidirectional Attention}

BiA is not inherently rank deficient. We show that its
attention matrix can achieve full rank under suitable projection parameters.

Given an input matrix $\mathbf{X}$ with distinct rows, there exists a
projection direction $\mathbf{v}$ such that
\[
\mathbf{u}=\mathbf{X}\mathbf{v}
\]
contains pairwise distinct values. Choosing one-dimensional query and key
projections as
\[
\mathbf{Q}=\alpha\mathbf{u},\qquad
\mathbf{K}=\beta\mathbf{u},
\]
produces the attention score matrix
\[
\mathbf{S}=\gamma\mathbf{u}\mathbf{u}^{\top}.
\]

Although $\mathbf{S}$ is rank one, the element-wise exponential matrix
generated by the softmax operation,
\[
\mathbf{E}_{ij}=\exp(\gamma u_i u_j),
\]
is full rank for distinct $\{u_i\}$. Since the softmax normalization only
introduces an invertible diagonal scaling, the resulting bidirectional
attention matrix can achieve
\[
\operatorname{rank}(\mathbf{A}_{\textup{bid}})=n.
\]

Therefore, BiA itself does not suffer from an intrinsic
rank limitation. Its limitation is the absence of a structural guarantee for
preserving rank during repeated token interaction.

\subsection{Rank Preservation of Causal Attention}
\label{app:causal_rank}

\begin{lemma}[Invertibility of Causal Attention]
\label{lem:causal_invertible}
The causal attention matrix
$\mathbf{A}_{\textup{causal}}$ is invertible:
\[
\operatorname{rank}(\mathbf{A}_{\textup{causal}})=n.
\]
\end{lemma}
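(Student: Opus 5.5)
The plan is to show that $\mathbf{A}_{\textup{causal}}$ is lower triangular with strictly positive diagonal entries. Its determinant, being the product of the diagonal entries, is then nonzero, so the matrix has full rank $n$.

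\emph{Triangularity.} Write the causal attention matrix as
\[
\mathbf{A}_{\textup{causal}}=\operatorname{Softmax}\!\left(\mathbf{S}+\mathbf{M}^{\mathrm{causal}}\right),
\qquad
\mathbf{S}=\frac{\mathbf{Q}\mathbf{K}^\top}{\sqrt{d_k}},
\]
where $\mathbf{M}^{\mathrm{causal}}[i,j]=0$ for $j\le i$ and $-\infty$ otherwise, and the softmax acts row-wise. For $j>i$ the masked score is $-\infty$, so its exponential is exactly zero. Hence $\mathbf{A}_{ij}=0$ for all $j>i$.

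\emph{Positive diagonal.} Fix a row $i$. The visible set $\{1,\dots,i\}$ is nonempty and always contains $j=i$. The scores $S_{ij}$ are finite reals, because $\mathbf{Q}$ and $\mathbf{K}$ are finite projections of $\mathbf{X}$. Therefore
\[
\mathbf{A}_{ii}=\frac{\exp(S_{ii})}{\sum_{j\le i}\exp(S_{ij})}>0 .
\]
The numerator is positive and the denominator is a finite sum of positive terms.

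\emph{Conclusion.} Combining the two facts,
\[
\det\mathbf{A}_{\textup{causal}}=\prod_{i=1}^{n}\mathbf{A}_{ii}>0,
\]
so $\mathbf{A}_{\textup{causal}}$ is invertible and $\operatorname{rank}(\mathbf{A}_{\textup{causal}})=n$.

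\emph{Robustness of the argument.} The proof never uses the particular values of $\mathbf{W}_Q$ and $\mathbf{W}_K$. The conclusion therefore holds for all projection parameters and all inputs. This distinguishes it from the bidirectional case in the preceding subsection, where full rank only held for a suitable choice of parameters.

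\emph{Extension to Hourglass Attention.} The same argument covers HgA, since the paper describes it as a reordering of causal connectivity. One would exhibit a permutation under which $\mathbf{M}^{\mathrm{Hg}}$ becomes triangular with an unmasked diagonal. I do not pursue this here.

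\emph{Main obstacle.} Essentially the only point to check is that each query can see itself, i.e.\ the diagonal is unmasked. For standard causal masking this holds by definition. The remaining steps, triangularity and the determinant formula, are routine.

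\emph{Link to Theorem~\ref{thm:causal_rank_advantage}.} This lemma is the intended input for the theorem. Using $\operatorname{rank}(\mathbf{A}\mathbf{X}_V)=\operatorname{rank}(\mathbf{X}_V)$ for invertible $\mathbf{A}$, together with $\operatorname{rank}(\mathbf{A}_{\textup{bid}}\mathbf{X}_V)\le\operatorname{rank}(\mathbf{X}_V)$, gives the stated inequality.
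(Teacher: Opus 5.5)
Your proof is correct and follows the paper's argument essentially line for line. The causal mask makes $\mathbf{A}_{\textup{causal}}$ lower triangular, each diagonal entry $\exp(\mathbf{S}_{ii})/\sum_{j\le i}\exp(\mathbf{S}_{ij})$ is strictly positive, and hence $\det(\mathbf{A}_{\textup{causal}})=\prod_i(\mathbf{A}_{\textup{causal}})_{ii}>0$, which is exactly the paper's route (your remarks on finite scores and the unmasked diagonal only make explicit what the paper leaves implicit).
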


\begin{proof}
Due to the causal mask, $\mathbf{A}_{\textup{causal}}$ is a lower triangular
matrix. Its diagonal entries are:
\[
(\mathbf{A}_{\textup{causal}})_{ii}
=
\frac{
\exp(\mathbf{S}_{ii})
}{
\sum_{j=1}^{i}\exp(\mathbf{S}_{ij})
},
\]
where $\mathbf{S}=\mathbf{Q}\mathbf{K}^{\top}/\sqrt{d_k}$.

Since all exponential terms are strictly positive,
\[
(\mathbf{A}_{\textup{causal}})_{ii}>0 .
\]

\begin{figure*}[ht]
	\centering
	\includegraphics[scale=0.29
    ]{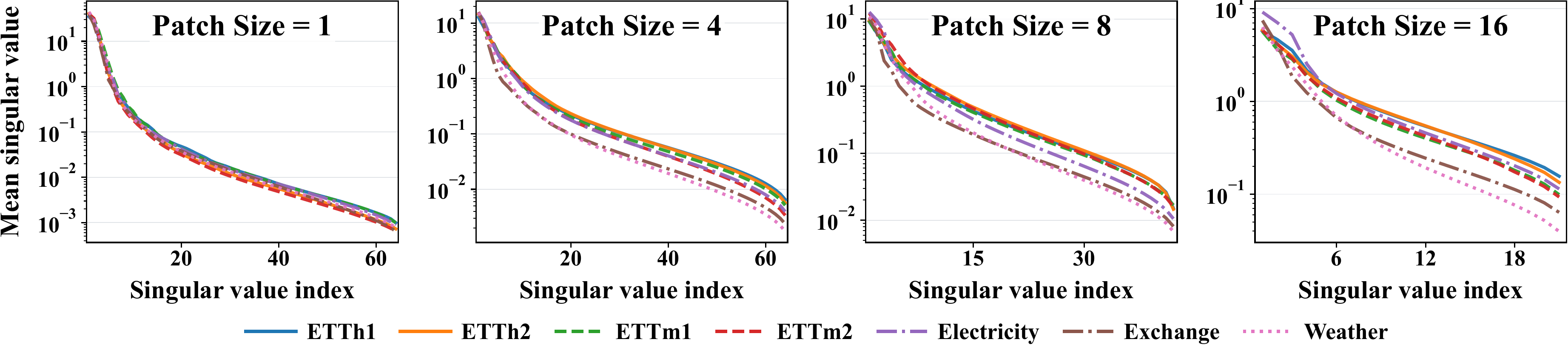}
  \caption{Rank Distribution of Embedding Matrices (Patch Size = 16).}
	\label{patch_comparison}
\end{figure*}

\begin{figure*}[ht]
    \centering
    { \textbf{(a) Bidirectional Attention}} \\
    \includegraphics[width=1\linewidth]{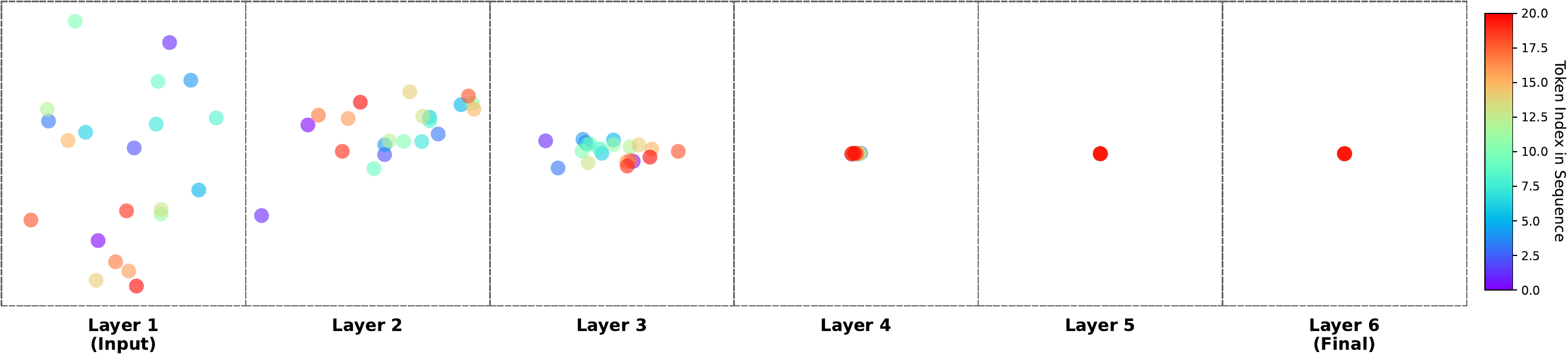} \\
    {\textbf{(b) Hourglass Attention}} \\
    \includegraphics[width=1\linewidth]{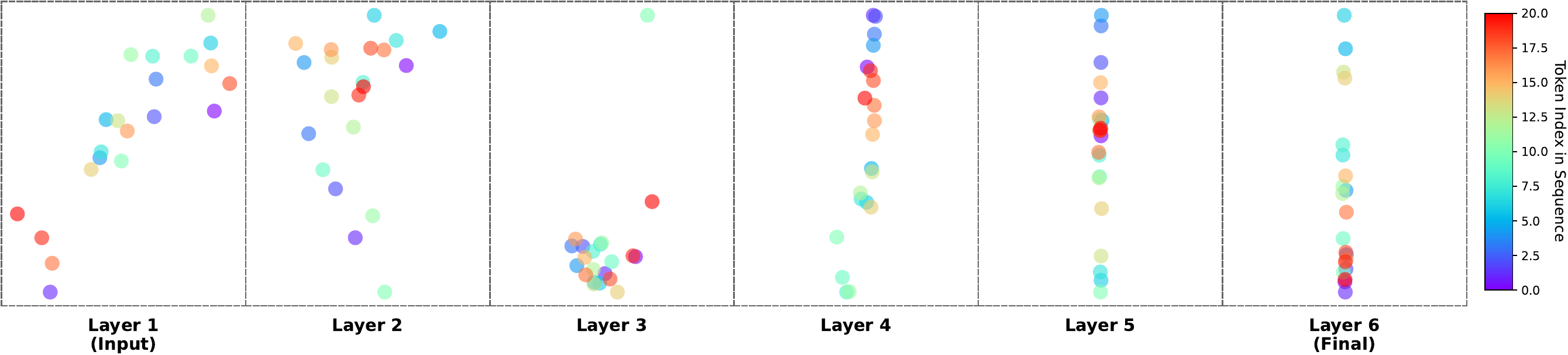}
    \caption{PCA visualization of token representation evolution across deep layers in a pure attention network. The color gradient indicates the token index within the sequence.}
    \label{fig:rep_collapse}
\end{figure*}

Therefore,
\[
\det(\mathbf{A}_{\textup{causal}})
=
\prod_{i=1}^{n}
(\mathbf{A}_{\textup{causal}})_{ii}
>0,
\]
which proves that $\mathbf{A}_{\textup{causal}}$ is invertible.
\end{proof}

\subsubsection{Proof of Theorem~\ref{thm:causal_rank_advantage}}
\label{app:proof_thm2}

\begin{proof}
Let
\[
\mathbf{X}_V=\mathbf{V}\mathbf{W}^{O}.
\]

From Lemma~\ref{lem:causal_invertible},
$\mathbf{A}_{\textup{causal}}$ is invertible. Therefore,
\[
\operatorname{rank}(\mathbf{O}_{\textup{causal}})
=
\operatorname{rank}
(\mathbf{A}_{\textup{causal}}\mathbf{X}_V)
=
\operatorname{rank}(\mathbf{X}_V).
\]

For bidirectional attention,
the rank of
\[
\mathbf{O}_{\textup{bid}}
=
\mathbf{A}_{\textup{bid}}\mathbf{X}_V
\]
is determined by whether the null space of
$\mathbf{A}_{\textup{bid}}$ intersects with the column space of
$\mathbf{X}_V$.

By the rank-nullity theorem restricted to
$\operatorname{Col}(\mathbf{X}_V)$:
\[
\operatorname{rank}(\mathbf{O}_{\textup{bid}})
=
\operatorname{rank}(\mathbf{X}_V)
-
\dim
(
\operatorname{ker}(\mathbf{A}_{\textup{bid}})
\cap
\operatorname{Col}(\mathbf{X}_V)
).
\]

Hence,
\[
\operatorname{rank}(\mathbf{O}_{\textup{causal}})
-
\operatorname{rank}(\mathbf{O}_{\textup{bid}})
=
\dim
(
\operatorname{ker}(\mathbf{A}_{\textup{bid}})
\cap
\operatorname{Col}(\mathbf{X}_V)
)
\geq0.
\]

Therefore,
\[
\operatorname{rank}(\mathbf{O}_{\textup{causal}})
\geq
\operatorname{rank}(\mathbf{O}_{\textup{bid}}).
\]

Equality holds exactly when
\[
\operatorname{ker}(\mathbf{A}_{\textup{bid}})
\cap
\operatorname{Col}(\mathbf{X}_V)
=
\{0\}.
\]
\end{proof}




\section{Additional Experimental Results}

\paragraph{Different Patch Sizes}
As shown in Figure \ref{patch_comparison}, we systematically evaluated the singular value distribution of embedding matrices under different patch sizes (1, 4, 8, 16) across 7 datasets. The experimental results show that for most datasets, the singular value decay curves of their embedding representations generally exhibit full-rank or near-full-rank characteristics, even under the smallest patch size (1). This indicates that the model can learn highly diverse and linearly independent representations for these datasets, with their effective rank significantly surpassing the inherent bottleneck of the original univariate input (which has an upper rank bound of 1).

\paragraph{Visualization of Representation Collapse}

As illustrated in Figure~\ref{fig:rep_collapse}, the evolutionary trajectories of the token representations differ drastically depending on the chosen attention mechanism:
\textbf{BiA:} The top panel (Figure~\ref{fig:rep_collapse}a) demonstrates that while tokens are initially scattered in the input space, they rapidly assimilate during forward propagation. By Layer 4, the representations of all tokens completely collapse into a single indistinguishable point. This visualizes the extreme over-smoothing and severe rank collapse inherent to deep unmasked attention.
\textbf{HgA:} The bottom panel (Figure~\ref{fig:rep_collapse}b) shows that HgA successfully preserves representational diversity. Even at the final layer (Layer 6), the tokens maintain a distinct and structured spread. This provides direct and compelling visual evidence of HgA's structural superiority in circumventing rank collapse and preserving full-rank capacity.

\end{document}